\newtheorem{theorem}{Theorem}[section]
\newtheorem{claim}{Claim}[section]
\newtheorem{lemma}{Lemma}[section]
\newtheorem{proposition}{Proposition}[section]
\newenvironment{proof}[1][Proof]{\textbf{#1.} }{\ \rule{0.5em}{0.5em}}
\newcommand{\e}{\epsilon}
\def\P{{\cal P}}
\def\p{\bar{\pi}}
\numberwithin{equation}{section}
\def\X{{\cal X}}
\def\e{\epsilon}
\def\bp{\bar{\pi}_n}
\def\a{\alpha}
\newcommand {\bx}{\mathbf{x}}
\title{An evolutionary model that satisfies detailed balance}
\author{J\"uri Lember\footnote{Institute of Mathematics and Statistics, University of Tartu, Estonia; jyril@ut.ee; +372 7375487}, Chris Watkins\footnote{Department of Computer Science, Royal Holloway, University of London, UK}}
\begin{document}
\maketitle
\begin{abstract}We propose a class of evolution  models that involves an arbitrary exchangeable process as the breeding process and different selection schemes.
In those models, a new genome is born according to the breeding
process, and after that  a genome is removed according to the
selection scheme that involves fitness. Thus, the  population size
remains constant. The process evolves according to a Markov chain,
and,  unlike in  many other existing models,  the stationary
distribution -- so called mutation-selection equilibrium -- can
easily found and studied. As a special case our model contains a (sub)class of Moran models. The behaviour of the  stationary
distribution when the population size increases is our main object
of interest. Several phase-transition theorems are proved.
\end{abstract}
\paragraph{Keywords:} Markov chain Monte Carlo,
Dirichlet distribution,  de Finetti theorem, weak convergence of
probability measures, Moran model.
\paragraph{AMS classification:} 60J10, 60B10
\section{Introduction}
\label{sec:probabilitymodels}

We introduce a probability model of evolution that has three purposes: as an abstract model of biological evolution; as a class of efficiently implementable genetic algorithms that are easy to analyze theoretically; and as a link between genetic algorithms and Bayesian non-parametric MCMC methods. The stationary distribution of the model can be expressed in closed form for arbitrary fitness functions: this enables us to investigate the behaviour of the model for different population sizes, mutation rates, and fitness scalings. We find two phase transitions that occur for all fitness functions.  Our approach is most applicable to evolution by sexual rather than asexual reproduction.

In any model of evolution under constant conditions, there is a
temporal sequence of possibly overlapping populations. In the
transition from each population to the next, one or more new
individuals are `born', and the same number of individuals are
removed from the population, or `die'. Each new generation is a
population that depends only upon the previous parent population, so
that the sequence of populations is a Markov chain.  In a model with
mutation, the Markov chain is irreducible and has a unique
stationary distribution that is also known as the mutation-selection
equilibrium. Typically we wish to know what the stationary distribution is.

{For Moran models with only a single locus, the stationary distribution can be characterised.
But for models with more than one locus, and fitness that depends nonlinearly on the combination of alleles at the loci, the stationary distribution is notoriously hard to characterise.  The reason  is
that in previous models of sexual reproduction with multiple loci, the
Markov chain of populations is irreversible, so that there is no
obvious method of finding the stationary distribution other than
attempting to compute eigenvectors of the transition matrix
directly, as done in \cite{vose1999simple}, but these calculations
are neither easy nor revealing for arbitrary fitness functions on multiple loci. }

For a reversible Markov chain, the stationary distribution may be found by verifying detailed balance conditions.
In our model, introduced in  \cite{watkins2014sex}, we start by writing the mutation-selection equilibrium in a convenient closed form, and then
exhibit MCMC kernels that implement reversible Markov chains with this stationary distribution, and for which the proposal and acceptance algorithms are plausible abstract models of breeding, mutation, and selection. Each generation starts with a population of $n$ individuals. One new
individual is `bred' -- that is, sampled conditionally on the
existing population -- to produce an expanded population of $n+1$
individuals; from this expanded population, one individual is
selected to be discarded, leaving a new population of size $n$ to
start the next generation.  This kind of evolution  is typically modeled with a  Moran model
\cite{moran1962statistical} and in subsections \ref{sec:dir} and \ref{sec:moran}, we shall see that when the breeding process is a generalised Polya urn model, then our model can be indeed considered as a discrete time Moran model with mutation and selection. But our model is more general that just a Moran one, because we allow a more general breeding process. However, due to the connection with Moran models, the generalized Polya urn model (or, equivalently, Dirichlet-categorical process) is an important special case of breeding so that section \ref{sec:dirichletprior} is devoted to that model.

In many evolutionary models such as \cite{crow1970introduction}, and in genetic algorithms such as \cite{holland1975adaptation},
the reproductive fitness of a genome is modelled as the genome's rate of breeding, and not as its probability of death. In these  models, in the breeding
phase of each generation, fit genomes are chosen more frequently to breed than unfit genomes are: in these models,  discarding individuals -- or `death' -- is modelled as uniformly random
deletion from the population. The second option is that the selection occurs at death: individuals are discarded (or `die')
with probability related to their fitness (termed viability selection): less-fit individuals are more likely to be discarded. For discrete time classical models such as the Wright-Fisher or Moran model, there is actually no big difference whether the selections occur at death or at reproduction (although formulas can be slightly different, see e.g. \cite{MuireheadWakeley}) and in the literature both versions are present.
In our model, breeding is conditional sampling from a very general exchangeable process and so it is quite natural that selection occurs at death.
This is a significant design choice that turns out to greatly simplify the analysis.
Observe that selection at death it does not mean that all allele  types have equal probability to be born. Our model might have breeding process as i.i.d. random variables with given distribution 
possibly very far form uniform, or the breeding process can be generalized Polya urn, where the history of evolution determines what types are more likely to be born.

As explained in detail in sections \ref{sec:breeding} and \ref{sec:selection}, careful choices in modelling breeding as conditional sampling, and in modelling fitness by
stochastic rejection of the less fit, enable us to construct a reversible Markov chain of populations that is a Metropolis-Hastings process.

\paragraph{Mathematical formulation, overview of the main results and
organization of the paper.} Throughout the paper, we suppose there
is a finite set $\X=\{1,\ldots,K\}$ of possible genomes. Our models apply both to the case of genomes with one locus with $K$ possible alleles,  and also to the case where genomes have more than one locus: for example, for genomes with $L$ loci with two possible alleles at each, then $K=2^L$.  Throughout the paper, $\xi_1,\xi_2,\ldots$
denotes an exchangeable $\X$-valued stochastic process. For any
population of $n$ genomes $\bx = (x_1, \ldots, x_n) \in \X^n$, we
define
\begin{equation}\label{eq:subtledef}
P_\xi(\bx) :=  P_{\xi}(x_1,\ldots,x_n) :=  P(\xi_1=x_1,\ldots,\xi_n=x_n)
\end{equation}
By definition of exchangeability, we have that for any permutation $\sigma$,
\begin{equation*}
P_\xi(x_1, \ldots, x_n) = P_\xi( x_{\sigma(1)}, \ldots , x_{\sigma(n)} ).
\end{equation*}
Given a population of genomes $\bx = (x_1,\ldots,x_n)$, we breed an $n+1$'th genome by sampling $x_{n+1}$ conditionally:
\begin{equation*}
x_{n+1} \sim P_\xi(\cdot \mid x_1, \ldots, x_n).
\end{equation*}
The `fitness' of a genome $x$ is denoted by $w(x)>0$, where $w$ is
an arbitrary strictly positive function over $\mathcal{X}$. In the
evolutionary models we propose, in each generation one individual is
`bred' by conditional sampling from the existing population, and
then an individual is discarded in a fitness-biased way, so that
less-fit individuals are more likely to be discarded.
As explained in subsections
\ref{sec:singletournamentselection},
\ref{sec:inversefitnessselection} and \ref{sec:breedmany}, there are
many possible schemes to formalize this idea, and we shall show that
all those schemes lead to one particular stationary distribution of
populations that factorises into the form:
\begin{equation}\label{eq:stationary1}
\underbrace{P_n(x_1, \ldots , x_n )}_\text{stationary distribution}   =\frac{1}{Z_n}  \underbrace{P_{\xi}(x_1, \ldots, x_n)}_\text{breeding term} ~  \underbrace{w(x_1) \cdots w(x_n)}_\text{fitness term}.
\end{equation}
The process is similar to, but not the same as non-parametric Bayesian MCMC, and we make this connection explicit in section \ref{sec:BayesianMCMC}.

The measure $P_n$ is the main object of
interest. Since $\xi$ is exchangeable, there exists a prior measure
$\pi$ on the  set of all probability measures on $\X$ (called simplex),
denoted by  $$\mathcal{P}:=\{(q_1,\ldots,q_k): q_i\geq 0, \sum_i q_i=1\},$$ such that for every $\bx\in \X^n$
\begin{equation}\label{fi}
P_{\xi}(\bx)=\int_{\P}\prod_{i=1}^n q(x_i)\pi(dq).
\end{equation}
Here $q(k)=q_k$, in what follows, we shall use both notations ($q_k$ and $q(k)$). Hence $P_n$ is fully determined by prior measure $\pi$ and fitness
function $w$. In order to achieve more generality, we allow $w$ and
$\pi$ to depend on population size $n$ (thus writing $w_n$ and
$\pi_n$), and we  ask: do the  measures $P_n$ converge (in some
sense) to a limit? The sense of convergence needs to be specified,
because  $P_n$ is defined on $\X^n$, so that the domain of the
measure depends on $n$. Therefore we consider the array of random
variables $(X_{n,1},\ldots,X_{n,n})\sim P_n$ and we ask: is there a
stochastic process $X_1,X_2,\ldots$ so that for every $m$
\begin{equation*}(X_{1,n},\ldots X_{m,n})\Rightarrow
(X_{1},\ldots X_{m})?\end{equation*}
Another option to define the limit is based on the observation that
the measure $P_n$ is invariant with respect to the permutations
(finitely exchangeable), and so it depends on the counts of vector
${\bf x}$, only. This, in turn, allows to define a counterpart of
$P_n$ -- a probability measure $Q_n$ on the simplex $\P$. The formal
definition of $Q_n$ is given in subsection \ref{sec:kuu}. Since all
measures $Q_n$ are defined on the same domain (simplex $\P$), we can
now ask whether the sequence $Q_n$ converges in the classical sense
of weak convergence of probability measures. Throughout the paper,
these two approaches -- measures $P_n$ (random process) and measures
$Q_n$ -- are handled in parallel.

Our first limit result, Theorem \ref{thm1} considers
the case when the prior measure $\pi$ is independent of $n$, but
arbitrary; the fitness function $w_n$ depends on $n$ in the
following way (recall $\X=\{1,\ldots,K\}$):
\begin{equation}\label{wn}
w_n(k)=\exp\left[-{\phi(k)\over n^{\lambda}}\right], \quad k=1,\ldots,K
\end{equation}
 where $\lambda\geq 0$
and $0\leq \phi(1)\leq \phi(2)\leq \cdots \leq \phi(K)$. The case $\lambda=0$
corresponds to fitness that is constant in that it does not vary
with $n$.  Observe that for any $n$, the genotype 1 is the fittest.
Theorem \ref{thm1} shows that the following phase transition occurs
with respect to $\lambda$:
\begin{enumerate}
  \item when $\lambda\in [0,1)$ and $\phi(1)<\phi(2)$, then  the limit process
  $X_1,X_2,\ldots$ has one trajectory $1,1,1,\ldots$ a.s., and
  $Q_n\Rightarrow \delta_{q^*}$, where $q^*=(1,\ldots,0).$
  Thus, when  $\lambda\in [0,1)$   only the fittest genotype
survives in the limit,  no matter what the prior
says; the influence of the prior vanishes and
fitness prevails. The additional assumption $\phi(1)<\phi(2)$ means that $w_n(1)>w_n(k)$, $k=2,\ldots,K$, and so it is quite natural for the result.
 \item When $\lambda=1$,  $Q_n$
converges to a nondegenerate distribution specified in Theorem
\ref{thm1} that depends on the prior $\pi$ and on the function
$\phi$.  When the prior $\pi$ has density  $\pi(q)$, then the limit distribution also has density proportional to 
\begin{equation}\label{l1}
\pi(q)\exp[-\langle \phi,q \rangle].
\end{equation}
Then also the limit process $X_1,X_2,\ldots$ is exchangeable process with prior (\ref{l1}).
non-degenerate.
\item When $\lambda>1$, then the influence of fitness vanishes and
only the prior matters -- the limit process is equal to the
breeding process and $Q_n\Rightarrow \pi$.
\end{enumerate}
Section \ref{sec:dirichletprior} considers Dirichlet priors
\begin{equation}\label{dir}
\pi_n={\rm Dir}(n^{1-\lambda}\alpha_1,\ldots
n^{1-\lambda}\alpha_K),\end{equation} where
$\a:=(\alpha_1,\ldots,\a_K)$, $\alpha_k>0$. Thus $\pi_n$ admits a density on ${\cal P}$
$$\pi(q_1,\ldots,q_k)\propto \prod_{k=1}^K q_k^{n^{1-\lambda}\alpha_k}.$$
With a Dirichlet prior the
breeding process is  a (generalized) Polya urn process, also known
as a Dirichlet-categorical process. This process is a natural
choice because it admits a notion of `mutation'.  Conditional
sampling from it is directly interpretable as a simplified model of
sexual breeding with mutation, as explained in section
\ref{sec:breeding}. The  fitness is previously
defined as in (\ref{wn}) and the common parameter $\lambda$ allows
to interpolate between influence of fitness and prior. The case
$\lambda=1$ corresponds to the constant prior case. Our second main
limit theorem, Theorem \ref{thm2} shows that with respect to
$\lambda$, phase transition occurs again:
\begin{enumerate}
  \item when $\lambda=0$, then the limit process $X_1,X_2,\ldots$
  consists of i.i.d. random variables with certain distribution
  $r^*$ (specified in Theorem \ref{thm2}) depending on $\phi$
  and $\alpha$, and $Q_n\Rightarrow \delta_{r^*}$.
  \item when $\lambda\in (0,1)$, then then the limit process $X_1,X_2,\ldots$
  consists of i.i.d. random variables with another distribution
  $q^*$ (specified in Theorem \ref{thm2}) also depending on
  $\phi$ and $\alpha$, and $Q_n\Rightarrow \delta_{q^*}$;
  \item when $\lambda=1$ and $\phi(1)<\phi(2)$, then the limit is specified by Theorem
  \ref{thm1}. From (\ref{l1}) it follows that the limit distribution of allele proportions has density proportional to
  \begin{equation}\label{l1dir}
   \exp[-\langle \phi,q \rangle]\prod_{k=1}^K q_k^{\alpha_k-1}.
   \end{equation}
  The limits of type (\ref{l1dir}) are common for standard models in evolution theory, both for Wright-Fisher and Moran model (see, e.g.  \cite{Durrett,Ewens,Feng}). We shall discuss the connection with our and Moran model in subsections \ref{sec:dir} and \ref{sec:moran}.

\end{enumerate}
The phase transitions at $\lambda=0$ and $\lambda=1$ are sharp. These are the most
important results of the paper.

The paper is organized as follows.  In section \ref{sec:breeding} we
give examples of exchangeable conditional sampling procedures that
can be regarded as abstract models of breeding, and section
\ref{sec:selection} gives examples of selection procedures that,
together with any of the breeding procedures, will
 produce a reversible Markov chain of populations with the stationary distribution of  (\ref{eq:stationary1}).  Subsection \ref{sec:noiseinvariance} establishes
 that the stationary distributions are invariant to multiplicative fitness noise; subsection \ref{sec:BayesianMCMC} shows that special cases of this process are forms of
  Bayesian inference by MCMC.  Next, section \ref{sec:measurepn} establishes basic conditions on the convergence of the stationary distribution to a limit distribution as population size tends to infinity.
  Section \ref{sec:largepopulationlimit} is devoted
  to Theorem \ref{thm1} and in section
  \ref{sec:dirichletprior} generalized Polya urn
  processes (Dirichlet priors) are considered. In this section,
  Theorem \ref{thm2} is proved and the product of independent Polya urn processes are considered (subsection
  \ref{sec:prod}).
    We present computational
  experiments demonstrating our results in section \ref{sec:experiments}. Finally we discuss implications of our results for genetic algorithms and evolutionary modelling in
  \ref{sec:conclusions}.

\section{Breeding and mutation}
\label{sec:breeding}

We model breeding as Gibbs sampling \cite{geman1984stochastic} from
an exchangeable distribution; exchangeable Gibbs sampling is a
standard technique  in statistical nonparametric MCMC methods, for
example \cite{neal2000markov,hjort2010bayesian}, but in that context
it is not of course regarded as a model of breeding. The property of
exchangeability of $P_\xi$ will be used in two ways: first, in
section \ref{sec:selection} we will use it to establish detailed
balance for several selection procedures, which establishes that the
stationary distribution of populations is indeed as given in
 (\ref{eq:stationary1}). Second, in section
\ref{sec:measurepn} we use the de Finetti integral representation of
$\xi$ to establish limit properties of the stationary distribution
as $n\rightarrow\infty$. We now give examples of  conditional
sampling that can be regarded as plausible models of breeding.
\subsection{Dirichlet-Categorical  Process}\label{sec:dir}
In the simplest case, each `genome' consists of only one locus or `gene'
which can be one of $K$ possible allele `types', that we denote by
$\{1,\ldots,K\}$. The exchangeable process $\xi$ is the well known
Polya urn model for a Dirichlet process with discrete base
distribution. We recall the definition of this process. Let $\alpha
= (\alpha_1, \ldots, \alpha_K)$, $\alpha_i>0$ be the prior
parameters of the base distribution; we write $|\alpha| := \alpha_1
+ \cdots + \alpha_K$. Let $\xi_1, \xi_2, \ldots$ be a random process
over $\{ 1,\ldots,K\}$. Let $n\ge 0$. Given $\bx=x_1, \ldots, x_n$,
 we denote the number of $k$-s in the sequence by
$n_k(\bx)$:
\begin{equation}\label{fr}
n_k=\sum_{i=1}^n I_{\{k\}}(x_i).
\end{equation}
From the Polya urn,  the next allele is drawn of type $k$ with the following probability:
\begin{equation*}
P_\xi( \xi_{n+1} = k \mid x_1, \ldots, x_n, \alpha ) := \frac{n_k + \alpha_k}{n + |\alpha | }.
\end{equation*}
It follows that
\begin{equation*}
P_\xi( x_1, \ldots, x_n \mid \mathbf{\alpha} ) = {(\alpha_1)_{n_1}\cdots (\alpha_K)_{n_K}\over (|\alpha|)_n},
\end{equation*}
where for any $\alpha>0$
$$(\alpha)_n:=\alpha(\alpha+1)\cdots(\alpha + n-1).$$
{The process}  $\xi_1, \xi_2, \ldots$ is infinitely exchangeable by inspection. Due to the exchangeability, by the formula above, it is clear that the random vector of counts has the distribution
\begin{equation}\label{cnts}
P_{\xi}(n_1,\ldots,n_K)={n!\over n_1!\cdots n_K!}{(\alpha_1)_{n_1}\cdots (\alpha_K)_{n_K}\over (|\alpha|)_n}.
\end{equation}
By de Finetti's theorem
$$P_\xi( x_1, \ldots, x_n \mid \mathbf{\alpha} ):=\int_{{\cal P}}
q(x_1)\cdots q(x_n)\pi (dq),$$ where the prior measure $\pi$ is Dirichlet distribution,
i.e. $\pi={\rm Dir}(\a_1,\ldots,\a_K)$. This process is in a sense
the central object of our study.

\paragraph{Concentration parameter, mutation rate and Moran models.}
The concentration parameter ${|\alpha|} = \alpha_1 + \cdots +
\alpha_K$ may be viewed as determining a mutation rate that depends
on $n$. With $n$ balls in the {sample (thus $n+|\a|$
balls in the urn)} when a new ball is sampled, there is a
probability $\frac{|\alpha|}{n+|\alpha|}$ that the ball will be
sampled from the collection of `prior' balls, rather than the actual
balls in the urn. This probability is independent of the colors of the
$n$ actual balls present:  since a new color may be introduced in
this way, we regard this as analogous to a mutation. The mutation
rate $u$ is
\begin{equation*}
u := \frac{|\alpha|}{|\alpha| + n} ~~~ \text{or equivalently} ~~~ |\alpha| = \frac{nu}{1-u}
\end{equation*}
To be more detailed, for any type $k=1,\ldots,K$, the `prior' ball of type $k$ is sampled
with probability
\begin{equation}\label{ua}
u_k:={\alpha_k\over |\alpha|+n} ~~~ \text{or equivalently} ~~~ \alpha_i = \frac{nu_i}{1-u}.\end{equation}
Hence $u_1+\cdots+u_K=u$ and $u_k$ is the rate (probability) of mutating from any other type to $k$. In our evolutionary processes, one new genome is sampled, and one genome is then discarded at each generation, so that $n$ remains constant. To define processes with the same mutation rate $u$ but with different values of $n$,
$\alpha$ must be adjusted to depend on $n$. Such single birth-death changes are often modeled via the well known Moran model, and it  turns out that our model of breeding  in some sense generalizes the drift-mutation  Moran model. Let us explain that connection more closely.
Recall that a standard (simple) $K$-allele Moran model without mutation and selection is as follows.  We consider a population of $n$ individuals and suppose  that at time $t$, the number of type $k$ individuals is $n_k$. An individual is randomly chosen to reproduce and another is independently chosen to die. When the type $i$ is chosen to reproduce and type $j$ is chosen to die, then
the allele counts at time $t+1$ are $(n_1,\ldots,n_i+1,\ldots,n_j-1,\ldots n_K)$ and the probability of such a change is ${n_in_j\over n^2}$.
There are several ways to incorporate mutation into the Moran model, but in our model the probability that type $i$ is born is
\begin{equation}\label{mor}
{n_i+\alpha_i\over |\alpha|+n}={n_i\over n}{n\over |\alpha|+n}+{\alpha_i\over |\alpha|+n_i}={n_i\over n}(1-u)+u_i\end{equation}
In the sum above, the first part is the transition probability without mutation multiplied with the probability of no mutation, the second part is the probability that the selected type mutates to type $i$. When the probability of $j$-type to die is just ${n_j\over n}$ we end up with a Moran model, where
the probability that $n_i$ becomes $n_i+1$ and $n_j$ becomes $n_j-1$ is
\begin{equation}\label{transition}{n_i n_j\over n^2}(1-u)+{n_j\over n}u_i.\end{equation}
It is easy to verify that  a Markov chain with transition (\ref{transition}) satisfies the detailed balance equation with stationary distribution being (\ref{cnts}), where $\alpha_k$ is defined as in (\ref{ua}). There are surely other ways to specify mutation in Moran model so that  (\ref{cnts}) is the stationary distribution. For $K=2$, one such example is in \cite{Ewens}: let $u_1$ be the probability that type $2$ mutates to 1 and $u_2$ be the probability that type 1 mutates to 2. Then the transition from $(n_1,n_2)$ to $(n_1+1,n_2-1)$ has probability
\begin{equation}\label{tr2}
{n_1n_2\over n^2}(1-u_2)+{n_2^2\over n^2}u_1,
\end{equation}
and detailed balance equation with (\ref{cnts}) (with $\alpha_k$ being defined as in (\ref{ua})) is easy to verify (this is (3.58) in \cite{Ewens}). Other examples of Moran models without selection having (\ref{cnts}) as stationary distribution can be found in \cite{VoglClemente}, where the authors consider a discrete time  version of so-called two-allele decoupled Moran model introduced in \cite{BaakeBialowons} (see also \cite{SchrempfHobolth}). In their model, the transition probability from  $n_i$ and $n_j$ to $n_i+1$ and $n_j-1$ without selection is very close to (\ref{transition}):
\begin{equation}\label{decoupled}
 {n_i n_j\over n^2}+u_i{n_j\over n}
\end{equation}
and the the detailed balance equation with (\ref{cnts})  is easy to to verify. In this case, $\alpha_i=nu_i$. A $K$-allele version of that model with transition matrix as in (\ref{decoupled}) is studied in \cite{EtheridgeGriffiths} and the stationary distribution (14) in \cite{EtheridgeGriffiths} is exactly (\ref{cnts}). Hence our breeding model largely incorporates mutation-drift version of Moran models, but it is much general, because the Dirichlet distribution is only one possible choice of prior measure $\pi$ in (\ref{fi}).

Note that in our model, mutations only occur at birth, and --
importantly -- the distribution of mutations does not depend on the
frequencies of different colors that are currently in the
population; mutations are distributed according to a fixed prior
distribution determined by the frequencies of `prior' balls in the
urn.

In our model, we count as a mutation any ball which results from a
draw of a `{prior}' ball: this definition is consistent with an
extension of our model to Dirichlet processes with continuous base
distributions, which we intend to consider in future work.
\subsection{Complex genomes: direct product of Dirichlet processes}
\label{sec:product}
More complex evolutionary models such
as genetic algorithms  require more complex genomes. Suppose that
each genome is a vector of $L$ genes, $x_i = (y^1_i , \ldots ,
y^L_i)$. Let $\xi$ be a direct product of $L$ independent
exchangeable processes $\xi = ( \xi^1, \ldots, \xi^L )$. Then
\begin{equation}
P_\xi( x_1 , \ldots, x_n) := \prod_{j=1}^L P_{\xi^j}(y^j_1, \ldots, y^j_n)
\end{equation}
which clearly is exchangeable as well. Exchangeable sampling from
$P_\xi(\cdot \mid x_1, \ldots, x_n)$, where each $x_i$ is a vector
of discrete values, can be viewed as a model of sexual reproduction
with the assumption of linkage equilibrium. A new vector $(y_{n+1}^
1, \ldots, y_{n+1}^L)$ is sampled by, for $1\le j \le L$, sampling
the $j$-th component $y_{n+1}^j \sim P_{\xi^{j}}(\cdot \mid y^j_1,
\ldots, y^j_n)$ independently from the rest of the components. In
words, each new element of the vector $x_{n+1}$ is either a copy of
the corresponding element of a randomly chosen member of the
existing population, or else a mutation. Instead of a new `child'
genome being constructed by random recombination of two parent
genomes, it is instead a random recombination of all $n$ existing
genomes in the population, with mutations.

This method
of constructing new genomes by `$n$-way recombination' is a widely used approach in genetic algorithms,
as used by
\cite{baluja1995removing,baluja1997genetic,baum2001genetic} and
others, and sexual reproduction with full linkage equilibrium is a
standard simplified model of sexual reproduction in population
genetics theory \cite{crow1970introduction,Ewens}.

The extension to Cartesian products of Dirichlet processes might
appear rather simple because each component of a new genome is
sampled independently of the others; however, this extension can
lead to models of great complexity because the fitness function $w$,
or equivalently $\phi$, can be an arbitrary function on
$\mathcal{X}^L$, so that the stationary distribution need not be a
product distribution. In genetic language, the fitness function can
have arbitrary epistasis.

Note that there are other discrete exchangeable distributions based
on Dirichlet distributions that could also be used as $P_\xi$. A
notable example is the discrete fragmentation-coagulation sequence
process introduced in \cite{elliott2012scalable}; this was intended
as statistical model for imputing phasing in genetic analysis, but
it could also be used as a breeding distribution for our purposes.

\section{Selection}
\label{sec:selection}

Several MCMC sampling methods give the factorized stationary
distribution given by equation\ (\ref{eq:stationary1}) and at the
same time are models of sexual reproduction that are as plausible as
those used in evolutionary computation or simplified models in
population genetics.

We suppose that each element  $x \in \X$ has a strictly positive weight $w(x)$. In context,
for brevity we will denote the weights $w(x_1), \ldots, w(x_n)$ as $w_1,\ldots,w_n$.
\subsection{Single tournament selection}
\label{sec:singletournamentselection}
We suppose that when a new genome $x_{n+1}$ is `born' and added to
the population, it competes to survive by having a tournament with
another randomly selected member of the population, $x_i$ say. The
probability that $x_{n+1}$ wins the tournament and ejects $x_i$ from
the population is $\frac{w_{n+1}}{w_{n+1} + w_i}$. This
probability is always well defined since $w$ is strictly positive.
An equally valid tournament winning probability is that $x_{n+1}$
wins with probability $\min\{1, \frac{w_{n+1}}{w_i}\}$.  These
two tournament winning probabilities are simply different
formulations of the Metropolis-Hastings acceptance rule; the proof
below establishes detailed balance for the first winning rule. The
algorithm for performing one generation of breeding, mutation, and
selection is:
\begin{enumerate}
\item Sample $ x_{n+1} \sim  P_\xi( \cdot \mid x_1, \ldots, x_n ) $
\item Sample $i$ randomly from $\{ 1, \ldots, n\}$
\item With probability $\frac{w_{n+1}}{w_i + w_{n+1}}$ replace $x_i$ with $x_{n+1}$ and discard $x_i$, otherwise discard $x_{n+1}$.
\end{enumerate}
Let $\bx$, $\bx^\prime$ be populations defined as
\begin{align*}
\bx &= x_1, \ldots , x_n\quad
\bx^\prime =  x_1, \ldots, x_{i-1},  x_{n+1}, x_{i+1}, \ldots, x_n
\end{align*}
\noindent Recall the measure $P_n(\bx)$ defined in
(\ref{eq:stationary1}). We now show that this measure satisfies
detailed balance. By exchangeability of $\xi$, we have:
\begin{equation}
P_\xi(\bx) P_\xi( x_{n+1} \mid \bx ) = P_\xi( x_1, \ldots , x_{n+1}) = P_\xi( \bx^\prime) P_\xi( x_i \mid \bx^\prime)
\end{equation}

\noindent Note that $x_{n+1}$ has tournament against $x_i$ with
probability $\frac{1}{n}$ and wins with probability
$\frac{w_{n+1}}{w_i + w_{n+1}}$, so:
\begin{align*}
P_n( \bx ) P( \bx \to \bx^\prime ) &=  P_n(\bx) \cdot P_\xi( x_{n+1} \mid \bx ) \cdot \frac{1}{n} \frac{w_{n+1}}{w_i + w_{n+1}} \\
&=\frac{1}{Z_n}  P_\xi( \bx ) w_1 \cdots w_n \cdot  P_\xi(x_{n+1} \mid\bx) \cdot \frac{1}{n} \frac{ w_{n+1}}{w_i + w_{n+1}}\\
&= \frac{1}{Z_n}P_\xi(x_1,\ldots, x_{n+1}) \cdot \frac{1}{n} \frac{w_1 \cdots w_{n+1}}{w_i + w_{n+1}} \\
&=  \frac{1}{Z_n} P_\xi(\bx^\prime) P_\xi(x_i \mid \bx^\prime) \cdot \frac{1}{n} \frac{w_1 \cdots w_{n+1}}{w_i + w_{n+1}} \\
&= \frac{1}{Z_n} P_\xi(\bx^\prime) w_1 \cdots w_{i-1} w_{n+1} w_{i+1} \cdots w_n \cdot P_\xi( x_i \mid \bx^\prime ) \cdot \frac{1}{n} \frac{w_i}{w_i + w_{n+1}}\\
& = P_n(\bx^\prime) P( \bx^\prime \to \bx).
\end{align*}

\subsection{Inverse fitness selection: limit of many tournaments}
\label{sec:inversefitnessselection} Suppose that many tournaments
are fought, and each time the loser of the previous tournament
fights another randomly chosen genome from the population. After
many tournaments, and at a stopping time, the current loser is
ejected. The current loser evolves according to a
irreducible aperiodic Markov chain, and the  limiting distribution
of ejection is the stationary distribution of that chain:
\begin{equation}
P(\text{ eject $i$ }) = \frac{\frac{1}{w_i} }{\frac{1}{w_1} + \cdots + \frac{1}{w_{n+1}} }.
\end{equation}
\noindent The algorithm for performing one generation of breeding and selection is then:
\begin{enumerate}
\item Sample $ x_{n+1} \sim  P_\xi( \cdot \mid x_1, \ldots, x_n ) $
\item Sample $i$ from  $\{1,\ldots,n+1\}$ with probabilities proportional to $\{ \frac{1}{w_1}, \ldots , \frac{1}{w_{n+1}}\}$
\item Discard $x_i$
\end{enumerate}
This process too satisfies detailed balance. With $\bx$ and
$\bx^\prime$ defined as above, note that:
\begin{align*}
P_n( \bx ) P( \bx \to \bx^\prime ) &=  P_n(\bx) \cdot P_\xi( x_{n+1} \mid \bx ) \cdot \frac{ \frac{1}{w_i} }{ \frac{1}{w_1} + \cdots + \frac{1}{w_{n+1}} }\\
&= \frac{1}{Z_n} P_\xi( \bx ) w_1 \cdots w_n \cdot  P_\xi(x_{n+1} \mid\bx) \cdot \frac{ \frac{1}{w_i} }{ \frac{1}{w_1} + \cdots + \frac{1}{w_{n+1}} }\\
&= \frac{1}{Z_n} P_\xi(x_1, \ldots, x_{n+1}) \frac{ w_1 \cdots w_{n+1}}{ w_{n+1} } \frac{ \frac{1}{w_i} }{ \frac{1}{w_1} + \cdots + \frac{1}{w_{n+1}} }\\
&= \frac{1}{Z_n} P_\xi(x_1, \ldots, x_{n+1}) \frac{ w_1 \cdots w_{n+1}}{ w_{i} } \frac{ \frac{1}{w_{n+1}} }{ \frac{1}{w_1} + \cdots + \frac{1}{w_{n+1}} }\\
&= P_n(\bx^\prime) \cdot P_\xi( x_i \mid \bx^\prime ) \cdot \frac{ \frac{1}{w_{n+1} }}{ \frac{1}{w_1} + \cdots + \frac{1}{w_{n+1}} }\\
& = P_n( \bx^\prime  ) P( \bx ^\prime\to \bx ).
\end{align*}
Note in passing that
\begin{equation}
\mathbf{E}\{ \mbox{weight of rejected genome} \mid w_1, \ldots , w_{n+1} \} = \frac{ 1}{ \frac{1}{w_1} + \cdots + \frac{1}{w_{n+1}} }.
\end{equation}
\noindent That is, the expected weight of the rejected genome is the harmonic mean of the
weights of the genomes in the current population, including the newly added genome $x_{n+1}$.
\paragraph{The case $n=1$ reduces to Metropolis-Hastings:}
with a population of size 1, we have proposal distribution
$P_\xi(x^\prime \mid x)  P_\xi(x) = P_\xi(x,x^\prime) = P_\xi(x \mid
x^\prime) P_\xi(x^\prime)$
and acceptance probability of $x^\prime$ given $x$ of $\frac{w(x^\prime)}{w(x) + w(x^\prime)}$, which gives a stationary distribution
\begin{equation*}
P_1(x) = \frac{P_\xi(x) w(x)}{\sum_x P_\xi(x) w(x) }.
\end{equation*}

\subsection{Exchangeable breeding of many offspring}
\label{sec:breedmany}

Another MCMC process which is also interpretable as an evolutionary algorithm breeds some arbitrary number $m$ of offspring to give a population of size $n+m$, and then from these selects $n$ genomes to form the next generation.  The algorithm for a single generation is as follows:

\begin{enumerate}
\item Pick a random number $m$ of offspring to breed, and a random number $t$ of tournaments to conduct. Both $m$ and $t$ should be independent of the current population $x_1, \ldots, x_n$
\item Breed $x_{n+1}, \ldots , x_{n+m}$ by sequential exchangeable sampling; that is, let $x_{n+i} \sim P_\xi(\cdot \mid x_1, \ldots, x_{n+i-1})$ for $1\le i \le m$
\item Assign `survival tickets' to each of $x_1, \ldots, x_n$;  the newly bred genomes $x_{n+1}, \ldots, x_{n+m}$ have as yet no survival tickets.
\item Repeat $t$ times:
\begin{enumerate}
\item Uniformly sample from the population a genome $x_i$ which currently has a survival ticket, and a genome $x_j$ which currently does not have a survival ticket.
\item Hold a tournament between $x_i$ and $x_j$; $x_i$ wins with probability $\frac{w(x_i)}{w(x_i) + w(x_j)}$.
\item The winner of the tournament gets the survival ticket; after the tournament, the winner has the ticket and the loser does not.
\end{enumerate}

\item After $t$ tournaments have taken place, the $n$ genomes currently holding survival tickets are selected to be the new
 population $x^\prime_1, \ldots, x^\prime_n$; the genomes that are not holding tickets are discarded.

\end{enumerate}

\noindent The arguments of section
\ref{sec:singletournamentselection} can readily be extended to show
that this algorithm has the same stationary distribution
(\ref{eq:stationary1}).
\subsection{The selection schemes and Moran models}
\label{sec:moran}
Recall that under the Dirichlet prior  our breeding process can be considered as Moran model with birth and mutation probabilities given in (\ref{mor}). Including the single tournament and  inverse fitness selections into the model, we end up with Moran model, where the transition probability that $n_i$becomes $n_i+1$ and $n_j$ becomes $n_j-1$ are, respectively,
\begin{equation}\label{morans}
\big({n_i\over n}(1-u)+u_i\big){n_j\over n}{w_i\over (w_i+w_j)},\quad \big({n_i\over n}(1-u)+u_i\big){w^{-1}_j\over \sum_{k}n_kw_k^{-1}+w^{-1}_i}.
\end{equation}
By the same arguments as in subsections \ref{sec:singletournamentselection} and \ref{sec:inversefitnessselection}, it is easy to verify that both transition matrices satisfy detailed balance equations with the stationary measure being (\ref{eq:stationary1}):
\begin{equation}\label{cnts2}
P_{n}(n_1,\ldots,n_K)={1\over Z_n}{n!\over n_1!\cdots n_K!}{(\alpha_1)_{n_1}\cdots (\alpha_K)_{n_K}\over (|\alpha|)_n}w_1^{n_1}\cdots w_K^{n_K}.
\end{equation}
Also, when adding the same single tournament or inverse fitness selections to the two allele model (\ref{tr2}), the detailed balance equations with (\ref{cnts2}) hold. The same holds for the decoupled Moran model (\ref{decoupled}): when adding single tournament or inverse fitness selections, the detailed balance equations with (\ref{cnts2}) hold. Of course, our selection schemes are far from the only possibilities for having (\ref{cnts2}) as stationary distribution. For example, when the mutation-drift Moran model (i.e. the probability of $j$-type to eject just being $n_j/n$, no selection) satisfies the detailed balance equation with  (\ref{cnts}) being the stationary distribution, then multiplying the transition probability that type $i$  is chosen to reproduce and $j$ to die simply by $w_i$ (as it is often done in the literature, e.g. \cite{BaakeBialowons,VoglClemente,Durrett}) then the detailed balance equation with (\ref{cnts2}) as stationary distribution holds. Thus in single tournament selection, the term ${w_i\over (w_i+w_j)}$ can be replaced  by $w_i$ and, again, detailed balance equations with (\ref{cnts2}) as stationary distribution hold (independently of drift-mutation probabilities, given that they satisfy detailed balance equation with (\ref{cnts})). To summarize: there are various different kinds of Moran models yielding the stationary distribution $P_n$ with $\Pi$ being the Dirichlet distribution

Typically in the evolution literature, for haploid models the weights $w_k$ are in form $(1+s_k)$, in two allele models simply $(1+s)$ and 1, where $s$ is either a positive or negative small number. In the large population limit, typically $s\cdot n\to \gamma$, where $\gamma$ is  constant. Hence, for large $n$, $(1+s)\approx (1+{\gamma\over n})\approx \exp[{\gamma\over n}]$ and this justifies our choice of weights  in (\ref{wn}). So, when in two allele model, $\phi(1)=0$ and $\phi(2)=\gamma>0$, thus $w_n(1)=1$ and $w_n(2)=e^{-\gamma/n}$, then the limit measure (\ref{l1dir}) has density
$${1\over Z_n} e^{-\gamma q_2}q_1^{\alpha_1-1}q_2^{\alpha_2-1}$$ that (with negative $\gamma$) is a classical and well-known limit, sometimes called Wright's formula \cite{BaakeBialowons} (see also (9) in \cite{VoglClemente}, 
Ch 5 in \cite{Ewens}, (7.28) in \cite{Durrett}, Ch 1 in \cite{Feng}, Ch 4 in \cite{Kingman}). We would like to stress   that we obtain our limit without any diffusion approximation and our main theorems \ref{thm1} and \ref{thm2} generalize this result in various ways.

\subsection{Invariance to fitness noise} \label{sec:noiseinvariance}

In real life, survival depends not just on fitness but also on luck.
Suppose that when each new `genome' is {`born'}, it combines its own
intrinsic fitness with an independent random amount of luck. It
keeps this amount of luck, unchanged, throughout its life. Does
individual luck at birth alter the stationary distribution?

To formalize the question, let $\psi_1,\psi_2,\ldots,\psi_n$  be discrete random variables that represent `luck'. We consider the $\psi_i$ as discrete random variables because the formal derivations become far simpler.  The
random variable $\psi_i$ can be interpreted as the individual
luck that multiplies the fitness of $i$-th individual. The log fitness
function of $i$-th individual is $\phi(x)+\psi_i$.
\begin{equation*}
P_n(\bx|\psi_1,\ldots,\psi_n) = \frac{1}{Z'_n} P_\xi( \bx ) \exp
\big[ -\sum_{i=1}^n \big(\phi(x_i) + \psi_i\big) \Big]
\end{equation*}
The joint probability
\begin{align*}
P_n(\bx;\psi_1,\ldots,\psi_n) &= \frac{1}{Z'_n} P_\xi(\bx) \exp
\big[ -\sum_{i=1}^n \big(\phi(x_i) +
\psi_i\big) \big] P(\psi_1,\ldots,\psi_n)\\
&=\frac{1}{Z'_n} P_\xi( \bx ) \exp \big[ -\sum_{i=1}^n
\phi(x_i)\big]\exp\big[-\sum_i \psi_i\big] P(\psi_1,\ldots,\psi_n).
\end{align*}
We see that the sum factorizes, and so $Z'_n=Z_n\cdot Z_n^{\psi}$,
where
$$Z_n=\sum_{\bx}P_\xi( \bx ) \exp \big[ -\sum_{i=1}^n
\phi(x_i)\big],\quad
Z^{\psi}_n=\sum_{\psi_1,\ldots,\psi_n}\exp\big[-\sum_i \psi_i\big]
\prod_i P(\psi_1,\ldots,\psi_n)$$
 Thus the joint measure factorizes
$$P_n(\bx;\psi_1,\ldots,\psi_n)=P_n(\bx)P_n^{\psi}(\psi_1,\ldots,\psi_n),$$
where
$$P_n^{\psi}(\psi_1,\ldots,\psi_n)={1\over
Z_n^{\psi}}P(\psi_1,\ldots,\psi_n)\exp[-\sum_i \psi_i]$$ is a
probability measure. Thus, after summing $\psi_1,\ldots,\psi_n$ out,
we end with  $P_n(\bx)$:
$$\sum_{\psi_1,\ldots,\psi_n}P_n(\bx;\psi_1,\ldots,\psi_n)=P_n(\bx).$$
It follows that  the stationary
distribution is unchanged by multiplicative noise that is
independent of the genomes, for all population sizes. Also,
considering $\psi_1,\ldots,\psi_n$ as the prior, we see that
posterior measure is  $P_n(\psi_1,\ldots,\psi_n)$ that is
independent of $\bx$.

\subsection{Fitness as likelihood: a connection with non-parametric Bayesian MCMC}
\label{sec:BayesianMCMC}

In Dirichlet Process mixture models, as described for example in \cite{neal2000markov,teh2010hierarchical}, items of data $d_1, \ldots, d_n$ are given, together with a likelihood function $l(x,d) = P(d\mid x)$, where $x \in \mathcal{X}$ is a discrete latent variable. The prior distribution over latent variables is given by the exchangeable process $\xi$, each item of data is associated with its corresponding latent variable, and the aim of MCMC fitting is to sample latent variables $x_1, \ldots, x_n$ from the distribution
\begin{equation}
\label{eq:likelihood}
P_n(x_1, \ldots, x_n) = \frac{1}{Z_n} P_\xi(x_1, \ldots, x_n) P(d_1 \mid x_1) \cdots P(d_n \mid x_n)
\end{equation}
where $Z_n$ is an appropriate normalizing constant. If we write
\begin{equation*}
{w_j( x_i) := P(d_j \mid x_i)}
\end{equation*}
then this distribution (\ref{eq:likelihood}) is produced by the MCMC
algorithm of section \ref{sec:singletournamentselection} with the
slight modification that the tournament between the new `genome'
$x_{n+1}$ and the randomly selected $x_j$ is a victory for $x_{n+1}$
with probability $\frac{w_j(x_{n+1})}{w_j(x_j) + w_j(x_{n+1})}$.

One might construct an evolutionary ``Just-So'' story as follows. On
a rock in the ocean, there are $n$ niches, in each of which one
member of the species $\xi$ can live; the fitness of $x$  living in
niche $j$ is $w_j(x)$.   Evolution occurs when a new individual,
bred by `n-way recombination' of all $n$ parents, then challenges
the occupant of a randomly chosen niche by fighting a tournament,
after which the victor survives and takes over the niche. This gives
a (fanciful) evolutionary interpretation to Bayesian latent variable
models with exchangeable priors.

\subsection{Designing a reversible evolutionary model}

Our larger aim is to develop a model of evolution that is
sufficiently realistic to capture some of the computational power of
natural evolution, but which is also simple and tractable for
analysis.  To ensure that our model satisfies detailed balance and
has a stationary distribution that factorises into a breeding and a
selection term, we have made the following simplifying assumptions
in addition to the usual simplifications of population genetics or
genetic algorithms:

\begin{description}
\item[Overlapping populations]: We believe that overlapping populations are necessary for reversibility. If full replacement of the population is enforced at each generation, there can be do guarantee that the population at time $t$ could be easily bred from the population at time $t+1$.  In MCMC, state changes typically occur through proposing changes that may or may not be accepted; in an overlapping generations model, if a proposed change is not accepted, we continue with the same population as before.

\item[$n$-way recombination]: in the product of Dirichlet processes breeding system,  each new genome is bred from $n$ parents rather than from two parents selected from the population. We conjecture that this is necessary for exact reversibility because, with long genomes and many mutations,  if a child is bred from two parents, then the child will be more similar to each of its parents than to other individuals in the population, so that `triples' of two parents and one child will be identifiable even in the stationary distribution. This breaks reversibility since the direction of time can be determined observing evolution in the stationary distribution.

\item[Mutation as sampling]: We consider mutation as sampling from a base distribution of possible alleles. This model of mutation is not as general as those found in biology, where mutation probabilities are not symmetric or reversible.

\item[Fitness as lifetime]: All members of the population `breed' at the same rate, and differences in fitness affect only the expected lifetime of an individual. This clearly differs from many types of natural selection, but it is also well known that many organisms continue to produce offspring throughout their lives, so that their total reproductive success depends on their lifetime, as well as on other factors.
\end{description}

It is beyond the scope of this article to argue further whether our model successfully abstracts some essential computational aspects of evolution with sexual reproduction: we present it merely as an abstraction of sexual evolution which is significantly more tractable to analyse than other apparently simple models.

\section{The measure $P_n$}
\label{sec:measurepn}
The measure $P_n$ in (\ref{eq:stationary1})  is our main object of
interest. In this section we show that the marginal distributions
$P_n$ over the set of genotypes converge as the population size $n
\rightarrow \infty$; in the next section we characterize these
limits.

Since $\xi$ is exchangeable, by de Finetti's theorem there exists a
prior measure $\pi$ on {the set of all probability measures on $\X$
(simplex)} ${\cal P}$ such that for every $\bx\in \X^n$, the measure
$P_{\xi}(\bx)$ allows the representation (\ref{fi}). Hence we can
write $P_n$ as follows
\begin{equation}\label{measure2}
P_n(\bx)={1\over Z_n}\prod_{i=1}^n w(x_i)\int_{\P} \prod_{i=1}^n
 q(x_i)\pi(d q)={1\over Z_n}\int_{\P} \prod_{i=1}^n
 \big(q(x_i)w(x_i)\big)\pi(d q),
\end{equation}
where $Z_n$
is the normalizing constant. In order to analyze the measure, it is
convenient to rewrite it as follows. First, let us introduce some
notation
$$\langle q,w \rangle:=\sum_{k=1}^K q(k)w(k),\quad r_q(k):={w(k)q(k)\over \langle q,w \rangle},\quad k=1,\ldots,K.$$
Note that since $w(k)>0$ for all $k$, $r_q$ is correctly defined for
every $q \in \mathcal{P}$. Thus $\langle q,w \rangle$ is the
expected weight (under  $q$-measure)
 and $r_q$ is a probability measure on $\P$.  Now
\begin{equation}\label{eq:pn}
P_n(\bx)={1\over Z_n} \int_{\P} \prod_{i=1}^n
 \big(q(x_i)w(x_i)\big)\pi(d q)={1\over Z_n}\int_{\P} \langle q,w \rangle^n \prod_{i=1}^n r_q(x_i) \pi
 (dq).\end{equation}
Since $r_q$ is a probability measure, it holds that $\sum_{\bx}
\prod_{i=1}^n r_q(x_i)=1$ and so the normalization constant for
$P_n$ is
$$Z_n=\sum_{\bx\in \X^n}\int_{\P} \langle q,w \rangle^n \prod_{i=1}^n r_q(x_i) \pi
(dq)=\int_{\P} \langle q,w \rangle^n  \pi (dq).$$ Finally note that
(\ref{eq:pn}) can be rewritten more neatly by defining the measure
\begin{equation}\label{eq:pi}
d\bar{\pi}_n:={\langle q,w \rangle^n\over Z_n}d\pi.\end{equation}
With $\bp$, we have
\begin{equation}\label{pibn}
P_n(\bx)=\int_{\P}\prod_{i=1}^n r_q(x_i) \bp (dq)\quad \forall \quad \bx\in \X^n.
\end{equation}
From(\ref{pibn}), it is easy to find all marginal distributions,
namely for any $m=1,\ldots,n$
\begin{equation}\label{marginals}
P_n(x_1,\ldots,x_m)=\int_{\P}\prod_{i=1}^m r_q(x_i) \bp (dq)={1\over
Z_n}\int_{\P} \langle q,w \rangle^{n-m} \prod_{i=1}^m q(x_i)w(x_i)
\pi (dq).
\end{equation}
In particular, when $(X_1,\ldots,X_n)\sim P_n$, then
\begin{align*}
P(X_i=k)&=\int_{\P}r_q(k) \bp (dq)={1\over Z_n}\int \langle q,w
\rangle^{n-1} w(k) q(k) \pi (dq)\\
P(X_i=k,X_j=l)&=\int_{\P}r_q(k)r_q(l) \bp (dq)={1\over Z_n}\int
\langle q,w \rangle^{n-2}w(k)q(k)w(l)q(l) \pi (dq)
\end{align*}
and so on. It is important to observe that $P_n(x_1,\ldots,x_m)$
depends on $n$.
\paragraph{The limit process.} We have defined for every $n$ the measure (\ref{pibn})
that describes the genotype distribution of a $n$-element
population. Now the natural question is: do these measures converge
(in some sense) if the population size $n$ grows? First we have to
define the sense of convergence. Since every measure $P_n$ is
defined on different domain ($\X^n$), we cannot speak about standard
(weak) convergence of measures. Instead, we ask about the existence
of a limiting stochastic process. To explain the sense of
convergence, consider that we have defined a triangular array of
random variables:
\begin{align*}
&X_{1,1}\sim P_1\\
&(X_{2,1},X_{2,2})\sim P_2\\
&(X_{3,1},X_{3,2},X_{3,3})\sim P_3\\
&\cdots \\
&(X_{n,1},X_{n,2},\ldots,X_{n,n})\sim P_n\\
&\cdots
\end{align*}
We also know that the joint distribution of the first $m$ variables
in every row depends on $n$.  Therefore we ask: is there a
stochastic process $X_1,X_2,\ldots$ so that for every $m$ the
following convergence holds
\begin{equation}\label{convm}(X_{1,n},\ldots X_{m,n})\Rightarrow
(X_{1},\ldots X_{m})?\end{equation}
According to Kolmogorov's existence theorem, the existence of a
stochastic process is equivalent to the existence of (finite
dimensional) measures $P^*_m$ on set $\X^m$, $m=1,2,\ldots$  that
satisfy  the following consistency conditions: for every $m$ and for
every $(x_1,\ldots,x_m)\in \X^m$, it holds that
$$\sum_{x_{m+1}}P_{m+1}^*(x_1,\ldots,x_m,x_{m+1})=P_{m}^*(x_1,\ldots,x_m).$$
If we also want (\ref{convm}) to be true, then for every $m$ and for
every $(x_1,\ldots,x_m)\in \X^m$ the following convergences must
hold:
\begin{equation}\label{fin-dim}
P_n(x_1,\ldots,x_m)\to P^*(x_1,\ldots,x_m),\quad \forall m,\quad
\forall (x_1,\ldots,x_m)\in \X^m.\end{equation} We now present a
general lemma that guarantees the convergence (\ref{fin-dim}). To
achieve the full generality, we let $w$ also depend on $n$. Thus, we
have weights $w_n$, and we  define the measures $r_{q,n}$ as follows
$$
r_{q,n}(k):={w_n(k)q(k)\over \langle q,w_n \rangle} \quad \forall
k\in \X.$$
 We start with the following observation, proven in appendix.
\begin{claim}\label{claim} If  $w_n(k)\to w(ik)$ $\forall k\in \X$, and $r_{q,n}$ and $r_q$ are defined with respect of $w_n$ and $w$, respectively,
then the following uniform convergence holds.
\begin{equation}\label{unif}
\sup_{q\in \P}|r_{q,n}(k)-r_q(k)|\to 0.
\end{equation}
\end{claim}
 In the following lemma,  $\bp$ {are}
 arbitrary probability measures on $\P$, not necessarily as in
 (\ref{eq:pi}). The measure $\bp$  define $P_n$ as in (\ref{pibn}).
\begin{lemma}\label{lemma1} Let $w_n(k)\to w(k)$ for every $k\in \X$. If there exists an probability measure $\bar{\pi}$ such
 that $\bp\Rightarrow \bar{\pi}$, then for every $m$ there exists a probability measure
 $P_m^*$  on $\X^m$ so that (\ref{fin-dim})
 holds. Moreover, for every $(x_1,\ldots,x_m)\in \X^m$
$$P^*_m(x_1,\ldots, x_m)=\int_{\P}\prod_{i=1}^m
r_{q}(x_i)\bar{\pi}(dq),\quad {\rm where}\quad r_q(k)={w(k)q(k)\over
\langle w, q \rangle},\quad k=1,\ldots,K
$$ and the measures $P^*_m$, $m=1,2,\ldots$ satisfy consistency
conditions.\end{lemma}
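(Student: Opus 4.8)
The plan is to reduce everything to a single ``add-and-subtract'' estimate that separates the two sources of variation in $P_n$: the dependence of the integrand on $n$ through $r_{q,n}$, and the dependence of the integrating measure on $n$ through $\bp$. Fix $m$ and a point $(x_1,\ldots,x_m)\in\X^m$. Since each $r_{q,n}$ is a probability measure on $\X$, summing over the last $n-m$ coordinates in the representation $P_n(\bx)=\int_\P\prod_{i=1}^n r_{q,n}(x_i)\,\bp(dq)$ collapses those factors to $1$, so the marginal is
$$P_n(x_1,\ldots,x_m)=\int_\P \prod_{i=1}^m r_{q,n}(x_i)\,\bp(dq).$$
Writing $f_n(q):=\prod_{i=1}^m r_{q,n}(x_i)$ and $f(q):=\prod_{i=1}^m r_q(x_i)$, I want $\int f_n\,d\bp\to\int f\,d\bar\pi$, and I will split this as
$$\Big|\int f_n\,d\bp-\int f\,d\bar\pi\Big|\le \Big|\int (f_n-f)\,d\bp\Big|+\Big|\int f\,d\bp-\int f\,d\bar\pi\Big|.$$

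For the first term I will use that $\bp$ is a probability measure, so $\big|\int(f_n-f)\,d\bp\big|\le \sup_{q\in\P}|f_n(q)-f(q)|$. Because $r_{q,n}(k),r_q(k)\in[0,1]$, the elementary bound $|\prod_i a_i-\prod_i b_i|\le\sum_i|a_i-b_i|$ for factors in $[0,1]$ gives $\sup_q|f_n-f|\le\sum_{i=1}^m\sup_q|r_{q,n}(x_i)-r_q(x_i)|$, and each summand tends to $0$ by Claim \ref{claim}. Hence the first term vanishes uniformly in the measures, which is exactly why the \emph{uniform} convergence of Claim \ref{claim} (rather than mere pointwise convergence in $q$) is needed.

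For the second term I will invoke $\bp\Rightarrow\bar\pi$ applied to the fixed test function $f$. The key point is that $f$ is bounded and continuous on $\P$: each factor $r_q(x_i)=w(x_i)q(x_i)/\langle q,w\rangle$ has continuous numerator, while the denominator $\langle q,w\rangle=\sum_k q(k)w(k)\ge\min_k w(k)>0$ is bounded away from zero on the simplex (here strict positivity of $w$ is essential), so $r_q$ is continuous and the finite product $f$ is continuous and bounded by $1$. Thus $\int f\,d\bp\to\int f\,d\bar\pi$ by the definition of weak convergence, and combining the two estimates yields (\ref{fin-dim}) with limit $P_m^*$.

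Finally I will check that the $P_m^*$ form a consistent family of probability measures. Summing $P_m^*(x_1,\ldots,x_m)=\int_\P\prod_{i=1}^m r_q(x_i)\,\bar\pi(dq)$ over all of $\X^m$ and using $\sum_k r_q(k)=1$ gives total mass $\int_\P 1\,\bar\pi(dq)=1$; applying the same identity to the last coordinate gives $\sum_{x_{m+1}}\prod_{i=1}^{m+1}r_q(x_i)=\prod_{i=1}^m r_q(x_i)$, and integrating against $\bar\pi$ yields the Kolmogorov consistency condition. The only genuine subtlety in the whole argument is the simultaneous variation of integrand and measure; once that is handled by the split above, the two pieces are controlled respectively by Claim \ref{claim} and by continuity of $f$ on the simplex, so neither step presents a serious obstacle.
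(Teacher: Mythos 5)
Your proposal is correct and follows essentially the same route as the paper: both reduce to the marginal representation $P_n(x_1,\ldots,x_m)=\int_\P\prod_{i=1}^m r_{q,n}(x_i)\,\bp(dq)$, control the integrand's $n$-dependence via the uniform convergence of Claim \ref{claim}, handle the measure's $n$-dependence via weak convergence against the bounded continuous test function $\prod_i r_q(x_i)$, and verify consistency by summing out the last coordinate. Your write-up merely makes explicit the add-and-subtract split and the continuity of $r_q$ (denominator bounded below by $\min_k w(k)>0$) that the paper leaves implicit.
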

\begin{proof}
For every $x_1,\ldots, x_m$ from (\ref{unif}), it follows that
\begin{equation}\label{unif2}
\sup_{q\in \P}|\prod_{i=1}^m r_{q,n}(x_i)-\prod_{i=1}^m r_q(x_i)|\to
0.
\end{equation}
Since the functions
$$q\mapsto \prod_{i=1}^m r_{q,n}(x_i),\quad q\mapsto \prod_{i=1}^m r_q(x_i)$$
are bounded (by 1) continuous functions, from uniform convergence,
it holds
$$P_n(x_1,\ldots,x_m)=\int_{\P}\prod_{i=1}^m r_{q,n}(x_i)\bp(dq)\to \int_{\P}\prod_{i=1}^m
r_{q}(x_i)\bar{\pi}(dq){=:}P^*_m(x_1,\ldots, x_m).$$
Clearly $P_m^*$ are probability measures. The consistency condition
trivially holds, because
$$\sum_{m+1}P^*_{m+1}(x_1,\ldots,x_{m+1})=\int_{\P}\sum_{x_{m+1}}\prod_{i=1}^{m+1}
r_{q}(x_i)\bar{\pi}(dq)=\int_{\P}\prod_{i=1}^m
r_{q}(x_i)\bar{\pi}(dq)=P^*_m(x_1,\ldots, x_m).$$\end{proof}
\subsection{Frequencies: the measure $Q_n$}\label{sec:kuu}
We now consider how to express the limit measure $P^*$ in terms of a
limiting measure on the simplex $\mathcal{P}$. Recall $n_k(\bx)$
defined in (\ref{fr}) and let
\begin{equation*}
\mathbf{n}(\bx)  := (n_1, \ldots, n_K ), \quad \text{where $n_k=
n_k(\bx)$, for $k=1,\ldots,K$}
\end{equation*}
Since $\xi$ is exchangeable,  the probability $P_\xi(\bx)$ depends
on the counts $\mathbf{n}(\bx)$ only,  so
\begin{equation*}\label{g}
P_{\xi}(\bx)=:g(n_1,\cdots,n_K).
\end{equation*}
We may now write:
\begin{equation}\label{definetti}
g(n_1,\dots,n_K)=P_{\xi}(\bx)=\int_{\P}\prod_{i=1}^n
q(x_i)\pi(dq)=\int_{\P}\prod_{k=1}^K q(k)^{n_k}\pi(dq).
\end{equation}
In what follows, let us denote
$$\mathbb{N}_n:=\{(n_1,\ldots,n_K): \sum_k n_k=n\}.$$
Observe that
$$\sum_{(n_1,\ldots,n_k)\in \mathbb{N}_n}{n!\over n_1!\cdots n_K!}
g(n_1,\cdots,n_K)=1.$$ \noindent Therefore, the  measure $P_n$ can be defined
on the set $\mathbb{N}_n$ as follows:
\begin{align}\label{measuren} P_n({\bf n})&
:={1\over Z_n}\frac{n!}{n_1!\cdots
n_K!}g(n_1,\ldots,n_K)\prod_{k=1}^K \big( w_n(k)\big)^{n_k}={n!\over n_1!
\cdots n_K!} \int_{\P} \prod_{k=1}^K \big(r_{q,n}(k)\big)^{n_k} \bp
(dq).
\end{align}
Considering the frequencies instead of counts, we can define the
corresponding measure on the simplex $\P$. Let us denote that
measure as $Q_n$, so that with ${\bf n}/n:=(n_1/n,\ldots n_k/n)$
\begin{equation}\label{Qn}
Q_n\biggl({{\bf n}\over n}\biggr):=P_n({\bf n}),\quad \forall {\bf n}\in
\mathbb{N}_n.\end{equation} Thus $Q_n$ is a discrete measure
$$Q_n=\sum_{{\bf n}\in \mathbb{N}_n}
P_n({\bf n})\delta_{{\bf n}\over n}.$$ The advantage of $Q_n$ over
the measure $P_n$ on $\mathbb{N}_n$ is that for any $n$, $Q_n$ is
defined on the same domain $\P$, and so one can speak about the weak
convergence of $Q_n$. Essentially, obviously, the measure $P_n$ on
$\X^n$, the measure $P_n$ on $\mathbb{N}_n$ and $Q_n$ on $\P$ are
all the same, just the domains are different.

Since the measures $Q_n$ are defined on the same space (simplex), it
is now natural to ask, whether there exists a probability measure
$Q^*$ so that $Q_n\Rightarrow Q^*$? It turns out the if the
assumption of Lemma \ref{lemma1} holds, i.e.  $\pi_n\Rightarrow
{\bar \pi}$ and $w_n\to w$ (pointwise), then the limit measure is
actually ${\bar \pi}r^{-1}$, where
$$r: \P \mapsto \P,\quad r(q)=r_q$$
and  $r$ is defined with respect to limit weight function $w$. Thus
for a measurable $E\subset \P$,
$${\bar \pi}r^{-1}(E)={\bar \pi}\big(r^{-1}(E)\big).$$
 For example, if ${\bar \pi}=\delta_{q^*}$ (the measure is
concentrated on one point), then
$${\bar \pi}r^{-1}=\delta_{r(q^*)},$$ because
$$\delta_{q^*}r^{-1}(E)=1\quad \Leftrightarrow \quad q^*\in
r^{-1}(E)\quad \Leftrightarrow \quad r(q^*)\in E.$$ The following
lemma  is the counterpart of Lemma \ref{lemma1}. Again, $\bp$ is an
arbitrary sequence of probability measures on $\P$, $P_n$ are
defined via $\bp$ by (\ref{measuren}) and $Q_n$ via $P_n$ as in
(\ref{Qn}).
\begin{lemma}\label{lemma2} Let $w_n(k)\to w(k)$ for every $k\in \X$.   If there exists  a probability measure $\bar{\pi}$ such
 that $\bp\Rightarrow \bar{\pi}$, then $Q_n\Rightarrow {\bar
 \pi}r^{-1}$.\end{lemma}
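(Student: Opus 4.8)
The plan is to establish the weak convergence $Q_n\Rightarrow\bar{\pi}r^{-1}$ by testing against continuous functions. Since the simplex $\P$ is compact, every continuous $f:\P\to\mathbb{R}$ is bounded and uniformly continuous, and weak convergence is equivalent to $\int_\P f\,dQ_n\to\int_\P f\,d(\bar{\pi}r^{-1})$ for all such $f$. By the change-of-variables formula for the pushforward measure, $\int_\P f\,d(\bar{\pi}r^{-1})=\int_\P f(r_q)\,\bar{\pi}(dq)$, so the goal reduces to showing $\int_\P f\,dQ_n\to\int_\P f(r_q)\,\bar{\pi}(dq)$.

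First I would rewrite $\int_\P f\,dQ_n$ using the definition of $Q_n$ and representation (\ref{measuren}). Since $Q_n=\sum_{\mathbf{n}\in\mathbb{N}_n}P_n(\mathbf{n})\delta_{\mathbf{n}/n}$, we have $\int_\P f\,dQ_n=\sum_{\mathbf{n}\in\mathbb{N}_n}P_n(\mathbf{n})f(\mathbf{n}/n)$. Because $\mathbb{N}_n$ is a finite set, the finite sum interchanges freely with the integral against $\bp$ in (\ref{measuren}), giving
$$\int_\P f\,dQ_n=\int_\P h_n(q)\,\bp(dq),\qquad h_n(q):=\sum_{\mathbf{n}\in\mathbb{N}_n}\frac{n!}{n_1!\cdots n_K!}\prod_{k=1}^K\big(r_{q,n}(k)\big)^{n_k}f(\mathbf{n}/n).$$
The inner sum $h_n(q)$ is exactly $\mathbf{E}[f(N/n)]$ for a random vector $N=(N_1,\ldots,N_K)$ that is multinomially distributed with $n$ trials and cell probabilities $r_{q,n}=(r_{q,n}(1),\ldots,r_{q,n}(K))$; note that $h_n$ is continuous in $q$, being a finite sum of products of the continuous maps $q\mapsto r_{q,n}(k)$, so the integral is well defined.

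The heart of the argument is to show $h_n\to f\circ r$ uniformly on $\P$, i.e. $\sup_{q\in\P}|h_n(q)-f(r_q)|\to 0$. This combines three ingredients: the uniform convergence $r_{q,n}\to r_q$ from Claim \ref{claim}; a uniform law of large numbers for the multinomial, obtained from the coordinatewise variance bound $\mathrm{Var}(N_k/n)=r_{q,n}(k)(1-r_{q,n}(k))/n\le 1/(4n)$, which yields $\mathbf{P}(|N/n-r_{q,n}|\ge\delta)\le C/(n\delta^2)$ uniformly in $q$ by Chebyshev; and the uniform continuity of $f$ on the compact simplex. Concretely, given $\e>0$ I would choose $\delta$ from the uniform continuity of $f$, split $\mathbf{E}|f(N/n)-f(r_q)|$ over the event $\{|N/n-r_q|<\delta\}$ and its complement, and use $|N/n-r_q|\le|N/n-r_{q,n}|+|r_{q,n}-r_q|$ together with the two uniform estimates to bound the complement probability uniformly in $q$. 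This is the step I expect to be the main obstacle: both the integrand $h_n$ and the cell probabilities $r_{q,n}$ vary with $n$, so a pointwise law of large numbers does not suffice and the convergence must be forced to be uniform in $q$.

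Finally I would pass to the limit in $\int_\P h_n\,d\bp$ using the standard estimate for a varying integrand against a varying measure:
$$\Big|\int_\P h_n\,d\bp-\int_\P (f\circ r)\,d\bar{\pi}\Big|\le \sup_{q\in\P}|h_n(q)-f(r_q)|+\Big|\int_\P (f\circ r)\,d\bp-\int_\P (f\circ r)\,d\bar{\pi}\Big|.$$
The first term vanishes by the uniform convergence just established, and the second vanishes by the hypothesis $\bp\Rightarrow\bar{\pi}$, since $f\circ r$ is bounded and continuous; here continuity of $r$ follows from $w>0$, which keeps $\langle q,w\rangle\ge\min_k w(k)>0$ bounded away from zero, so that $q\mapsto r_q$ is continuous on $\P$. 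This yields $\int_\P f\,dQ_n\to\int_\P f(r_q)\,\bar{\pi}(dq)=\int_\P f\,d(\bar{\pi}r^{-1})$ for every continuous $f$, which is precisely $Q_n\Rightarrow\bar{\pi}r^{-1}$.
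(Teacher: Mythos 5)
Your proposal is correct and follows essentially the same route as the paper: both reduce weak convergence to testing against continuous $f$, interchange the finite sum over $\mathbb{N}_n$ with the integral against $\bar{\pi}_n$ to recognize the inner sum as the multivariate Bernstein polynomial of $f$ evaluated at $r_{q,n}$ (your multinomial expectation $\mathbf{E}[f(N/n)]$ is exactly that object), establish uniform-in-$q$ convergence of this quantity to $f(r_q)$ using Claim \ref{claim} and uniform continuity of $f$, and finish with the triangle-inequality split between the uniform-convergence term and the weak-convergence term. The only cosmetic difference is that you supply the Chebyshev/variance argument for the uniform Bernstein convergence, which the paper simply cites as well known.
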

\begin{proof} Let $f: {\cal P}\to \mathbb{R}$ be a $(K$-variable) bounded
continuous function. By definition of the weak convergence, it
suffices to show that
\begin{equation}\label{int}
\int f(q) Q_n(dq)\to \int f(q) {\bar \pi}r^{-1}(dq)=\int f(r_q){\bar
\pi}(dq),\end{equation}
where the last equality holds by the change of variable formula.
Note that
\begin{align*}
\int f(q)Q_n(dq)&=\sum_{{\bf n}\in \mathbb{N}_n} f\Big({{\bf n}\over
n}\Big)P_n({\bf n})=\sum_{{\bf n}\in \mathbb{N}_n} f\Big({{\bf
n}\over n}\Big){n!\over n_1! \cdots n_K!} \int \prod_{k=1}^K
\big(r_{q,n}(k)\big)^{n_k} \bp (dq)\\
&=\int \Big(\sum_{{\bf n}\in \mathbb{N}_n} f\Big({{\bf n}\over
n}\Big) {n!\over n_1! \cdots n_K!} \prod_{k=1}^K
\big(r_{q,n}(k)\big)^{n_k} \Big)
\bp(dq)\\
&=\int f_n(r_{q,n}(1),\ldots,r_{q,n}(K)) \bp (dq)=\int f_n(r_{q,n})
\bp (dq) ,\end{align*} where
$$f_n(r_{q,n}):=:f_n(r_{q,n}(1),\ldots,r_{q,n}(K)):=\sum_{(n_1,\ldots,n_K)\in \mathbb{N}_n:} f\Big({n_1\over n},\ldots,{n_1\over n}\Big){n!\over n_1!
\cdots n_K!} \prod_{k=1}^K \big(r_{q,n}(k)\big)^{n_k}$$ is the
Bernstein polynomial  evaluated at
$r_{q,n}=(r_{q,n}(1),\ldots,r_{q,n}(K))$. It is easy to see and well
known that for any vector $r\in \P$, $f_n(r)\to f(r)$,  moreover, the
convergence is uniform over $\P$:
\begin{equation}\label{bern}
\sup_{r\in \P}\Big|f_n(r)-f(r)\Big|\to 0.\end{equation} Since for
every $q$, $r_{q,n}$ is a probability vector, then
$${n!\over n_1! \cdots n_K!} \prod_{k=1}^K
\big(r_{q,n}(k)\big)^{n_k}\leq 1,$$ and since $f$ is bounded, we see
that for every $n$, $q\mapsto f_n(r_{q,n})=:b_n(q)$ is a bounded
continuous function. Also the function $q\mapsto f(r_{q})=:b(q)$ is
a bounded continuous function. Then
\begin{align*}
\sup_q|b_n(q)-b(q)|&\leq \sup_q|f_n(r_{q,n})-f(r_{q,n})|+
\sup_q|f(r_{q,n})-f(r_{q})|.
\end{align*}
By (\ref{unif}), $\sup_q|r_{q,n}(k)-r_q(k)|\to 0$ for every $k\in
\X$. Then also  $\sup_q\|r_{q,n}-r_q\|\to 0$. A continuous function
on compact space is uniformly continuous, so
$$\sup_q|f(r_{q,n})-f(r_{q})| \to 0.$$ By (\ref{bern}),
$$\sup_q|f_n(r_{q,n})-f(r_{q,n})|\leq \sup_r|f_n(r)-f(r)|\to 0.$$
Therefore, we have shown that $\sup_q|b_n(q)-b(q)|$ implying that
$$\int f(q)Q_n(dq)=\int b_n(q)\bp(dq)\to \int b(q){\bar \pi}(dq)=\int f(r_{q}){\bar \pi}(dq).$$
\end{proof}
\section{$P^*$ and $Q^*$ in the large population limit}
\label{sec:largepopulationlimit} In what follows, let us rewrite
fitnesses in terms of $\phi(k) := - \ln( w(k))$, so that for any
genome $k\in \X$, $w(k) = \exp(-\phi(k))$. Moreover, in order to
increase the influence of prior, we let weights  $w_n$ depend on $n$
{as defined in (\ref{wn}), i.e. $w(k) =
\exp(-\phi(k)/n^{\lambda})$}.  Clearly, for every $k$, $w_n(k)\to
w(k)$ and $\lambda$ controls the speed of that convergence. When
$\lambda>0$, then $w(k)=1$ implying that in this case the mapping
$r$ is identity, i.e. for every $q$, $r_q=q$.

Let us return to our original $\bp$, defined as in (\ref{eq:pi})
with $w_n$:
\begin{equation}\label{pinn}
\bar{\pi}_n(E)=\int_E {\langle q,w_n \rangle^n\over Z_n}\pi (dq),\quad
Z_n=\int \langle q,w \rangle^n \pi(dq).\end{equation} In this section
we consider the case where the prior measure $\pi$ is independent of
$n$, and the support of $\pi$ is the whole simplex ${\cal P}$.
When  $w(1)>w(2)$, the function $q\mapsto \langle q,w \rangle$ has
unique maximizer $q^*:=(1,0,\ldots,0)$. The following theorem states that under $w(1)>w(2)$
that phase transition occurs: when $0\leq \lambda <1$, then
 $\bar{\pi}_n\Rightarrow \delta_{q^*}$ and $P_n(x_1,\ldots,x_m)\to
P^*(x_1,\ldots,x_m),$ where
$$P^*(x_1,\ldots, x_m)=\prod_{i=1}^m r_{q^*}(x_i)=\prod_{i=1}^m
q^*(x_i)=\left\{
            \begin{array}{ll}
              1, & \hbox{if for every $i$, $x_i=1$ ;} \\
              0, & \hbox{else.}
            \end{array}
          \right.,
$$
 because in both cases (i.e. $\lambda=0$ and $\lambda\in (0,1)$), it holds that $r_{q^*}=q^*$.
 Thus the limit process $X_1,X_2,\ldots$ has
only one realization: $1,1,\ldots$.  When $\lambda=1$, then
$\bar{\pi}_n$ as well $Q_n$ converges to a nondegenerate
distribution, when $\lambda>1$, then $\bar{\pi}_n\Rightarrow \pi$,
$Q_n\Rightarrow \pi$ and the measure $P^*$ is the law of birth
process $\xi$.
\begin{theorem}\label{thm1} Let the fitness function be defined as in (\ref{wn}) and assume that the support of the prior $\pi$  is ${\cal P}$. Then the following convergences hold:
\begin{description}
  \item[1)] If $\lambda \in [0,1)$ and $\phi(1)<\phi(2)$, then $\bar{\pi}_n\Rightarrow \delta_{q^*}$, $Q_n\Rightarrow \delta_{q^*}$
  and (\ref{fin-dim}) holds with
  $$P^*(x_1,\ldots,x_m)=\prod_{i=1}^mq^*(x_i),\text{   where  }
  q^*=(1,0,\ldots,0).$$
  \item[2)] If $\lambda=1$, then $\bar{\pi}_n\Rightarrow {\bar \pi}$, $Q_n\Rightarrow {\bar \pi}$ and (\ref{fin-dim}) holds with $$P^*(x_1,\ldots,x_m)=\int \prod_{i=1}^mq(x_i)\p(dq),$$ where for every $E\subset {\cal P}$,
  $${\bar \pi}(E)={1\over Z} \int_E \exp[-\langle \phi,q\rangle] \pi (dq),\quad Z=\int \exp[-\langle \phi,q\rangle] \pi (dq).$$
  \item[3)]  If $\lambda>1$, then $\bar{\pi}_n\Rightarrow {\pi}$, $Q_n\Rightarrow {\pi}$ and (\ref{fin-dim}) holds with
  $$P^*(x_1,\ldots,x_m)=P_{\xi}(x_1,\ldots,x_m).$$
\end{description}
\end{theorem}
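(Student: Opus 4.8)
The plan is to reduce the entire statement to the weak convergence of the tilted priors $\bp$ from (\ref{pinn}). Indeed, once $\bp\Rightarrow\mu$ is shown for the correct limit $\mu$, Lemma \ref{lemma1} gives (\ref{fin-dim}) with $P^*(x_1,\ldots,x_m)=\int_\P\prod_{i=1}^m r_q(x_i)\,\mu(dq)$ and Lemma \ref{lemma2} gives $Q_n\Rightarrow\mu r^{-1}$, where $w$ and $r$ are built from the pointwise limit of $w_n$. For $\lambda>0$ this limit is $w\equiv 1$, so $r$ is the identity and both $\mu r^{-1}=\mu$ and $r_q=q$; for $\lambda=0$ one has $w(k)=\exp(-\phi(k))$, but the only point that matters is $q^*=(1,0,\ldots,0)$, where $r_{q^*}=q^*$. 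Substituting these into the two lemmas produces exactly the three claimed forms of $P^*$ and $Q^*$, so everything rests on the asymptotics of $\langle q,w_n\rangle^n$.

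The engine of the proof is the uniform two-sided bound $\exp[-\phi(K)n^{1-\lambda}]\le\langle q,w_n\rangle^n\le\exp[-\phi(1)n^{1-\lambda}]$, which follows from $w_n(K)\le w_n(k)\le w_n(1)$, together with the expansion $n\log\langle q,w_n\rangle=-n^{1-\lambda}\langle\phi,q\rangle+O(n^{1-2\lambda})$ coming from $\exp[-\phi(k)/n^\lambda]=1-\phi(k)/n^\lambda+O(n^{-2\lambda})$. Hence the governing scale is $n^{1-\lambda}$. For $\lambda>1$ (case 3) the two-sided bound forces $\langle q,w_n\rangle^n\to 1$ uniformly on $\P$; writing $\int f\,d\bp$ as the ratio of $\int f\langle q,w_n\rangle^n\,d\pi$ to $\int\langle q,w_n\rangle^n\,d\pi$ and applying dominated convergence to numerator and denominator yields $\bp\Rightarrow\pi$. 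For $\lambda=1$ (case 2) the same bound with $n^{1-\lambda}=1$ gives the domination $\langle q,w_n\rangle^n\le\exp[-\phi(1)]$, while the expansion gives the pointwise limit $\langle q,w_n\rangle^n\to\exp[-\langle\phi,q\rangle]$; dominated convergence on the same ratio then yields $\bp\Rightarrow\p$ with $\p(E)=\frac1Z\int_E\exp[-\langle\phi,q\rangle]\pi(dq)$.

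The genuine work is case 1, $\lambda\in[0,1)$, where numerator and denominator both vanish and a Laplace argument is required to show $\bp(B_\e(q^*)^c)\to 0$ for every $\e>0$; since $w_n(1)>\cdots>w_n(K)$, $q^*$ is the unique maximizer of $\langle q,w_n\rangle$. Writing $\psi(k):=\phi(k)-\phi(1)\ge 0$ (strictly positive for $k\ge 2$) and using $1-e^{-x}\ge x/2$ for small $x\ge 0$, I would obtain $\langle q,w_n\rangle/w_n(1)\le\exp[-\tfrac12 n^{-\lambda}\langle\psi,q\rangle]$, so that on $B_\e(q^*)^c$, where $\langle\psi,q\rangle\ge c_\e>0$ by compactness, the numerator is at most $w_n(1)^n\exp[-\tfrac12 c_\e n^{1-\lambda}]$. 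For the denominator I would keep only the integral over a small ball $B_\delta(q^*)$, on which $1-e^{-x}\le x$ gives $\langle q,w_n\rangle/w_n(1)\ge 1-C\delta n^{-\lambda}$ and hence the lower bound $w_n(1)^n\exp[-2C\delta n^{1-\lambda}]\,\pi(B_\delta(q^*))$; here the full-support hypothesis is indispensable, as it guarantees $\pi(B_\delta(q^*))>0$. The factors $w_n(1)^n$ cancel in the ratio, and choosing $\delta$ with $2C\delta<\tfrac12 c_\e$ leaves the exponent $-(\tfrac12 c_\e-2C\delta)n^{1-\lambda}\to-\infty$, whence $\bp\Rightarrow\delta_{q^*}$.

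The main obstacle is exactly this matching of exponential rates in case 1: one must arrange that the decay rate of the mass near the maximizer (controlling the denominator) is strictly slower than the decay rate away from it (controlling the numerator), which forces the inner radius $\delta$ to be taken small relative to the separation constant $c_\e$. The boundary value $\lambda=0$ is handled the same way but more directly, since then the small-$x$ expansion is unavailable: using uniqueness and compactness one has $\langle q,w\rangle\le w(1)(1-\eta_\e)$ off $B_\e(q^*)$ and $\langle q,w\rangle\ge w(1)(1-C\delta)$ on $B_\delta(q^*)$, and the same cancellation of $w(1)^n$ with $\delta$ chosen so that $C\delta<\eta_\e$ gives $\bp\Rightarrow\delta_{q^*}$ at rate $n^{1-\lambda}=n$.
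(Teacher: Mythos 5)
Your proposal is correct, and its overall architecture coincides with the paper's: everything is reduced to the weak convergence of $\bp$ from (\ref{pinn}), after which Lemmas \ref{lemma1} and \ref{lemma2} deliver (\ref{fin-dim}) and the convergence of $Q_n$, with the observation that $r$ is the identity when $\lambda>0$ and that $r_{q^*}=q^*$ at $q^*=(1,0,\ldots,0)$ when $\lambda=0$. Cases \textbf{2)} and \textbf{3)} are handled exactly as in the paper, via the uniform limit (\ref{exp}) together with bounded/dominated convergence of the numerator and denominator. The one genuine divergence is in case \textbf{1)}: the paper delegates the concentration $\bp\Rightarrow\delta_{q^*}$ to Proposition \ref{point2}, whose proof runs through the convergence of $L^{m_n}$-norms $\|f_n\|_{m_n}\to\|f\|_{\infty}$ and a bound on $\sup_q h_n(q)$ off the set ${\cal S}^*_{\delta}$, whereas you prove the same concentration directly by matching exponential rates: an upper bound $\exp[-\tfrac12 c_\e n^{1-\lambda}]$ (relative to $w_n(1)^n$) for the mass off $B_\e(q^*)$ against a lower bound $\exp[-2C\delta n^{1-\lambda}]\,\pi(B_\delta(q^*))$ near $q^*$, with $\delta$ chosen small relative to $c_\e$. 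Your route is more elementary and self-contained, and it makes the role of the full-support hypothesis (through $\pi(B_\delta(q^*))>0$) completely explicit; the paper's route buys a reusable general statement (Proposition \ref{point2}) that is then applied again, with Lebesgue measure in place of $\pi$, in the proof of Theorem \ref{thm2}. Both arguments correctly exploit that $\langle\psi,q\rangle=0$ only at $q^*$, so the two proofs are interchangeable here.
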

\subsection{Proof of Theorem \ref{thm1}}
Before proving the theorem, let us state a very useful preliminary
result. Recall that simplex $\P$ is a compact set.
Let $f_n,f: {\cal P}\to \mathbb{R}^+$ be {continuous, hence bounded}
measurable functions so that $f_n\to f$ uniformly and let $m_n\to
\infty$ be an increasing sequence. We are given a measure $\pi$ on
${\cal P}$ {having full support (i.e. the support of
$\pi$ is ${\cal P}$)}, and we are interested in the asymptotic
behavior of the measure $\nu_n$, where
$$\nu_n(E):=\int_E h_n(q) \pi(dq),\quad   h_n(q):={f_n^{m_n}(q)\over \int f_n^{m_n}(q) \pi(dq)}=\Big({f_n(q)\over
\|f_n\|_{m_n}}\Big)^{m_n}.$$
Here  {$\|f\|_m$ stands for $L_m$-norm and} we assume
that $\int  f_n^{m_n} d\pi<\infty$ for every $n$. If $\pi$ is a
finite measure, then the conditions automatically holds due to the
boundedness of $f_n$.
 In what follows, let
{\begin{equation}\label{S} {\cal P}^*:=\{q\in {\cal
P}: f(q)=\|f\|_{\infty}\},\quad {\cal P}_{\delta}^*:=\{q\in {\cal
P}: f(q)> \|f\|_{\infty}-\delta\}.\end{equation}} Here
$\|f\|_{\infty}$ is the essential supremum of $f$ with respect to
the $\pi$-measure. If $f$ is continuous, then $\|f\|_{\infty}=\sup_q
f(q)$ {(because $\pi$ has full support)}. The proof
of the following Proposition \ref{point2} is given in appendix.
\begin{proposition}\label{point2} Let  $f_n\to f$ uniformly and {let $\pi$ be a finite measure on
$\P$ {having full support}. Then for every
$\delta>0$, $\nu_n\big({\cal P}_{\delta}^*\big)\to 1$.} If ${\cal
P}^*=\{q^*\}$, then $\nu_n\Rightarrow \delta_{q^*}.$
\end{proposition}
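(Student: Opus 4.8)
The plan is to treat $\nu_n$ as a Laplace-type measure that concentrates on the region where $f_n$ (hence $f$) is near its supremum, and then to upgrade this concentration to weak convergence when the maximizer is unique. Write $M := \|f\|_\infty = \sup_q f(q)$, which equals the essential supremum because $f$ is continuous and $\pi$ has full support. Throughout I may assume $\delta$ is small, since $\mathcal{S}^*_\delta$ increases with $\delta$, so establishing the claim for small $\delta$ yields it for all $\delta$; for small $\delta$ the quantities $M-\tfrac{3\delta}{4}$, $M-\tfrac{\delta}{2}$ are strictly positive.

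First I would prove $\nu_n(\mathcal{S}^*_\delta)\to 1$, equivalently $\nu_n\big((\mathcal{S}^*_\delta)^c\big)\to 0$, starting from
$$\nu_n\big((\mathcal{S}^*_\delta)^c\big)=\frac{\int_{(\mathcal{S}^*_\delta)^c} f_n^{m_n}\,d\pi}{\int_{\P} f_n^{m_n}\,d\pi}$$
and bounding numerator and denominator separately. On $(\mathcal{S}^*_\delta)^c$ we have $f\le M-\delta$; by uniform convergence $f_n\le M-\tfrac{3\delta}{4}$ there for all large $n$, so the numerator is at most $\pi(\P)\,(M-\tfrac{3\delta}{4})^{m_n}$. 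For the denominator, the essential point is that, because $\pi$ has full support and $f$ is continuous, the open set $A:=\{q: f(q)>M-\tfrac{\delta}{4}\}$ has $\pi(A)>0$; on $A$ uniform convergence gives $f_n\ge M-\tfrac{\delta}{2}$ for large $n$, so the denominator is at least $\pi(A)\,(M-\tfrac{\delta}{2})^{m_n}$. Dividing,
$$\nu_n\big((\mathcal{S}^*_\delta)^c\big)\le \frac{\pi(\P)}{\pi(A)}\left(\frac{M-\tfrac{3\delta}{4}}{M-\tfrac{\delta}{2}}\right)^{m_n}\longrightarrow 0,$$
since the base is strictly less than $1$ and $m_n\to\infty$.

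Next I would deduce $\nu_n\Rightarrow\delta_{q^*}$ when $\mathcal{S}^*=\{q^*\}$, by verifying $\nu_n(U)\to 1$ for every open neighborhood $U$ of $q^*$, which is sufficient for weak convergence to a point mass (for bounded continuous $g$, split $\int g\,d\nu_n$ over $U$ and $U^c$). Given such $U$, the complement $\P\setminus U$ is compact and avoids the unique maximizer, so $\sup_{\P\setminus U} f=:M'<M$ by continuity and compactness. Setting $\delta:=M-M'>0$ gives $\mathcal{S}^*_\delta=\{f>M'\}\subseteq U$, whence $\nu_n(U)\ge\nu_n(\mathcal{S}^*_\delta)\to 1$ by the first step. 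Equivalently, since $\bigcap_{\delta>0}\mathcal{S}^*_\delta=\{f=M\}=\{q^*\}$, the sets $\mathcal{S}^*_\delta$ shrink to $q^*$ and eventually sit inside any prescribed neighborhood.

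The main obstacle is the denominator lower bound: everything hinges on guaranteeing that $\pi$ places positive mass arbitrarily close to the maximum of $f$, which is precisely where the full-support hypothesis on $\pi$ and the continuity of $f$ are used to pass from $\|f\|_\infty$ as an essential supremum to an honest supremum with a positive-measure neighborhood beneath it. Once that is secured, the comparison of the two rates $(M-\tfrac{3\delta}{4})^{m_n}$ against $(M-\tfrac{\delta}{2})^{m_n}$ is routine, and the passage from concentration on $\mathcal{S}^*_\delta$ to a point mass is a standard compactness argument.
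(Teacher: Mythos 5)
Your proof is correct and follows essentially the same route as the paper: a Laplace-type estimate showing that the mass outside ${\cal S}^*_{\delta}$ is exponentially negligible relative to the normalizer, followed by a compactness argument placing ${\cal S}^*_{\delta}$ inside any neighbourhood of $q^*$ when the maximizer is unique. The only cosmetic difference is that you lower-bound the denominator directly via a set of positive $\pi$-measure on which $f$ exceeds $\|f\|_{\infty}-\delta/4$, whereas the paper packages the same fact into the convergence $\|f_n\|_{m_n}\to\|f\|_{\infty}$ of $L^{m_n}(\pi)$-norms; note that with $\|f\|_{\infty}$ read as the $\pi$-essential supremum, as the paper defines it, the positivity of that set's measure is automatic and full support is not actually needed for this step.
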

Besides Proposition \ref{point2}, the proof of Theorem \ref{thm1} is
based on the following well-known observation: when $m\to \infty$,
then
\begin{equation}\label{exp}
\sup_{q\in {\cal P}}\Big| \big\langle \exp[-{\phi\over m}], q \big\rangle^m - \exp[-\langle \phi, q
\rangle]\Big|\to 0.
\end{equation}
\begin{proof} {\bf (Theorem \ref{thm1})}
\begin{description}
\item[1)] For $\lambda=0$, take $f_n(q)=f(q)=\langle w,q \rangle$, {$m_n=n$. By $w(1)>w(2)\geq \cdots \geq w(K)$, we have ${\cal P}^*=\{q^*\}$
}, from Proposition \ref{point2}, it follows that $\bp\Rightarrow
\delta_{q^*}$. Since for any weight $w$, $r_{q^*}(k)=q^*(k)$,
from Lemma \ref{lemma1}, it follows that $$P^*(x_1,\ldots,
x_m)=\prod_{i=1}^m r_{q^*}(x_i)=\prod_{i=1}^m q^*(x_i).$$ Since
$\bp r^{-1}(q^*)=q^*$, from Lemma \ref{lemma2}, it follows that
$Q_n\Rightarrow \delta_{q^*}$. So, for $\lambda=0$, the
statement is proven and we  now consider the case $\lambda\in
(0,1)$. Let
$$f_n(q):=\langle \exp[-{\phi\over n^{\lambda}}], q
\rangle^{n^{\lambda}},\quad f(q):=\exp[-\langle \phi, q \rangle
].$$ By (\ref{exp}), $\|f_n-f\|_{\infty}\to 0$. Since $\lambda\in (0,1)$,
take $m_n=n^{1-\lambda}$. Then
$$h_n(q):={\langle \exp[-{\phi\over n^{\lambda}}], q
\rangle^{n}\over Z_n}$$  is the density of $\bp$ with respect to
$\pi$. Since $f$ is continuous, the set ${\cal P}^*$ in (\ref{S}) is
$${\cal P}^*= \arg\max_{q\in {\cal P}} f(q)=\arg\min_{q\in {\cal P}}\langle \phi, q \rangle=\{(1,0,\ldots,0)\}=\{q^*\}.$$
 By Proposition \ref{point2}, $\bp\Rightarrow \delta_{q^*}$. As
 in the case of $\lambda=0$, it follows that $Q_n\Rightarrow
\delta_{q^*}$ and $P^*(x_1,\ldots,x_m)=1$ if and only if
$x_1=\cdots=x_m=1$.
  \item[2)] Since for any $q$ and any $n$, it
holds $$\langle \exp[-{\phi\over n}], q \rangle^n\leq
e^{-\phi(1)}\leq 1,$$ we obtain from (\ref{exp}) and bounded
 convergence that for any
measurable $E$
\begin{equation}\label{Ekoo}
\int_E \langle \exp[-{\phi\over n}], q \rangle^n \pi (dq)\to \int_E
\exp[-\langle \phi, q \rangle]\pi(dq).\end{equation}
Recall
$$\p(E)={1\over Z}{\int_E}\exp[-\langle \phi, q
\rangle]\pi(dq),\quad {\rm where}\quad Z=\int \exp[-\langle \phi, q
\rangle]\pi(dq).$$ Therefore, from (\ref{Ekoo}), it follows that
when $\lambda=1$, we have {for every measurable $E$}
$$\bp(E)\to {\bar\pi}(E),$$
meaning that  $\bp\Rightarrow \p$ (even in a stronger sense).
Since $r_q=q$, from Lemma \ref{lemma1}, it follows that the
limits of $P_n(x_1,\ldots,x_m)$ are
$$P^*(x_1,\ldots,x_m)=\int \prod_{i=1}^mq(x_i)\p(dq)={1\over Z}\int \prod_{i=1}^mq(x_i)\exp[-\langle \phi, q \rangle]\pi(dq).$$
Since $r$ is identity function, by Lemma \ref{lemma2} the limit
measure of frequencies is $\p$, i.e. $Q_n\Rightarrow \p$.
  \item[3)] Since for any $q$,
$$\langle \exp[-{\phi\over n^{\lambda}}], q \rangle^n\to 1$$
 by dominated convergence, again, for any measurable $E$
\begin{equation*}\label{Ekoo2}
\int_E \langle \exp[-{\phi\over n}], q \rangle^n \pi (dq)\to
\pi(E).\end{equation*} Therefore $\bp\Rightarrow \pi$. By Lemma \ref{lemma1},  the limits of
$P_n(x_1,\ldots,x_m)$ are
$$P^*(x_1,\ldots,x_m)=\int \prod_{i=1}^mq(x_i)\pi(dq),$$
so that the limit process is $\xi$. The convergence
$Q_n\Rightarrow \pi$ follows from Lemma \ref{lemma2}.
\end{description}
\end{proof}

We have seen that the critical case $\lambda=1$ is the only case
where the prior and fitnesses both determine the limit measure. In
this case, the limit process $X_1,X_2,\ldots$, governed by $P^*$ has
marginals
\begin{align*}
P(X_i=k)&=P(X_1=k)=\int q(k) \bar{\pi}(dq)={1\over Z}\int e^{-\langle \phi,q
\rangle}q(k)\pi(dq),\\
P(X_i=k,X_j=l)&=P(X_1=k,X_2=l)= {1\over Z}\int e^{-\langle \phi,q
\rangle}q(k)q(l)\pi(dq).\end{align*} It is also interesting to point out that in the critical case $\lambda=1$, the measure $\bar{\pi}$ satisfies
$$\bar{\pi}=\arg\min_{\pi'\in E}D(\pi\|\pi'),$$
where $E$ is a set of probability measures on $\P$, namely
$E:=\{\pi': \int \langle \phi,q \rangle \pi'(dq)\geq c\}$,
{$D$ stands for Kullback-Leibler divergence and}
$c>0$ is a constant.
\section{Dirichlet prior}
\label{sec:dirichletprior}
Also in the current section we consider the weights $w_n(k)$ as in
(\ref{wn}), where $\lambda \in [0,1]$. We already know that in the
case of constant priors, the case $\lambda<1$ means that the
fitnesses will prevail over the prior. Therefore, it is meaningful
to consider non-constant priors so that the influence of prior
increases with suitable rate. This motivates us to consider
 Dirichlet priors (\ref{dir}), i.e.
$\pi_n={\rm Dir}(n^{1-\lambda}\alpha_1,\ldots
n^{1-\lambda}\alpha_K),$  where $\a:=(\alpha_1,\ldots,\a_K)$,
$\alpha_k>0$ and $|\a|:=\sum_k \a_k$. The constant $\sum_i
n^{1-\lambda}\a_k=|\a|n^{1-\lambda}$ is the so called {\it
concentration} or {\it precision} parameter, the bigger that
parameter, the more prior is concentrated over its expectation
$(\a_1/|\a|,\ldots,\a_K/|\a|)$. Increasing the concentration
parameter increases the influence of prior, and now it is clear the
the smaller is $\lambda$, the bigger must be the prior influence.
This justifies the choice of $n^{1-\lambda}$. The case $\lambda=1$
corresponds to already studied case of constant priors, therefore we
now consider the case $\lambda\in [0,1)$. The following theorem
shows that the phase transition occurs again.
\begin{theorem}\label{thm2} Let the fitness function be defined be defined as in
(\ref{wn}) and the prior $\pi_n$ as in (\ref{dir}). Let $\bp$ be
defined as in (\ref{pinn}) with $\pi_n$ instead of $\pi$.  Then the
following convergences hold:
\begin{description}
  \item[1)] If $\lambda=0$, then  $\bar{\pi}_n\Rightarrow
  \delta_{q^*}$, where $q^*$ is the unique maximizer of the
  following function
\begin{equation}\label{measure1}
  \ln \langle e^{-\phi} ,q \rangle + \sum_{k=1}^K\alpha_k\ln
q(k).\end{equation} Then $Q_n\Rightarrow \delta_{r^*}$, where
$r^*=r_{q^*}$, so that $r^*(k)\propto q^*(k)w(k)$, and
(\ref{fin-dim}) holds with
$$P^*(x_1,\ldots,x_m)=\prod_{i=1}^mr^*(x_i).$$
  \item[2)] If $\lambda\in (0,1)$, then  $\bar{\pi}_n\Rightarrow
  \delta_{q^*}$, where $q^*$ is the unique maximizer of the
  following function:
  \begin{equation}\label{measure2} -\langle
   \phi,q\rangle + \sum_{k=1}^K\alpha_k \ln q(k)
  \end{equation} Then
  $Q_n\Rightarrow \delta_{q^*}$ and (\ref{fin-dim}) holds
 with
$$P^*(x_1,\ldots,x_m)=\prod_{i=1}^m q^*(x_i).$$
\end{description}
\end{theorem}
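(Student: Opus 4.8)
The plan is to follow the same three-step scheme used for Theorem \ref{thm1}: first establish weak convergence of $\bp$ to a point mass $\delta_{q^*}$ via Proposition \ref{point2}, and then transfer this to the genotype marginals and to $Q_n$ through Lemma \ref{lemma1} and Lemma \ref{lemma2}. The one genuinely new feature is that the prior $\pi_n$ in (\ref{dir}) now depends on $n$, so Proposition \ref{point2}, which is stated for a fixed reference measure, cannot be applied directly to the density $\langle q,w_n\rangle^n$ against $\pi_n$. I would circumvent this by writing the density of $\bp$ against a fixed reference measure on $\P$ --- say the uniform (Lebesgue) measure --- so that the $n$-dependent Dirichlet weight $\prod_k q(k)^{n^{1-\lambda}\alpha_k-1}$ is absorbed into the integrand. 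The density of $\bp$ then becomes proportional to $\langle q,w_n\rangle^n\prod_k q(k)^{n^{1-\lambda}\alpha_k-1}$, and the whole expression can be recast in the form $f_n^{m_n}$ required by Proposition \ref{point2}.

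For part \textbf{1)} ($\lambda=0$) I would take $m_n=n$ and set
\[
f_n(q)=\langle e^{-\phi},q\rangle\prod_{k=1}^K q(k)^{\alpha_k-1/n},\qquad f(q)=\langle e^{-\phi},q\rangle\prod_{k=1}^K q(k)^{\alpha_k}.
\]
Since $\ln f(q)=\ln\langle e^{-\phi},q\rangle+\sum_k\alpha_k\ln q(k)$ is exactly the objective (\ref{measure1}), its maximiser set $\mathcal{S}^*$ reduces to the unique maximiser $\{q^*\}$ (uniqueness from strict concavity of the log-objective, which forces an interior $q^*$). Proposition \ref{point2} then gives $\bp\Rightarrow\delta_{q^*}$. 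Because here $w_n=w=e^{-\phi}$ is constant, Lemma \ref{lemma1} yields $P^*(x_1,\ldots,x_m)=\prod_i r_{q^*}(x_i)=\prod_i r^*(x_i)$ with $r^*(k)=r_{q^*}(k)\propto q^*(k)w(k)$, and Lemma \ref{lemma2} gives $Q_n\Rightarrow\delta_{q^*}r^{-1}=\delta_{r^*}$.

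For part \textbf{2)} ($\lambda\in(0,1)$) I would take $m_n=n^{1-\lambda}\to\infty$ and set
\[
f_n(q)=\langle w_n,q\rangle^{n^\lambda}\prod_{k=1}^K q(k)^{\alpha_k-n^{\lambda-1}},\qquad f(q)=e^{-\langle\phi,q\rangle}\prod_{k=1}^K q(k)^{\alpha_k},
\]
where $w_n=\exp[-\phi/n^\lambda]$. The uniform convergence $\langle w_n,q\rangle^{n^\lambda}\to e^{-\langle\phi,q\rangle}$ is precisely (\ref{exp}). Now $\ln f(q)=-\langle\phi,q\rangle+\sum_k\alpha_k\ln q(k)$ is the objective (\ref{measure2}), again strictly concave with a unique interior maximiser $q^*$, so Proposition \ref{point2} gives $\bp\Rightarrow\delta_{q^*}$. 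Since $\lambda>0$ forces $w_n\to\mathbf{1}$, the map $r$ is the identity, whence Lemma \ref{lemma2} gives $Q_n\Rightarrow\delta_{q^*}$ and Lemma \ref{lemma1} gives $P^*(x_1,\ldots,x_m)=\prod_i q^*(x_i)$.

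The main obstacle I expect is the verification that $f_n\to f$ uniformly on the closed simplex. The analytic factors $\langle w_n,q\rangle^{n^\lambda}$ are controlled by (\ref{exp}), but the Dirichlet factor $\prod_k q(k)^{\alpha_k-c_n}$ with $c_n\to 0$ must be shown to converge uniformly to $\prod_k q(k)^{\alpha_k}$ even though the coordinates $q(k)$ vanish on the boundary of $\P$; the subtlety is exactly there, where $q(k)^{-c_n}$ blows up while $q(k)^{\alpha_k}$ collapses to $0$. A one-coordinate estimate shows that $\sup_{t\in[0,1]}\big(t^{\alpha_k-c_n}-t^{\alpha_k}\big)$ is attained at an interior point $t^*\to e^{-1/\alpha_k}$ and is of order $c_n$, so each factor converges uniformly and, being uniformly bounded, the product does too. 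This uniform-convergence estimate, together with the strict-concavity argument that pins down $q^*$, is the only part that needs real care; everything else is a direct invocation of the three results above.
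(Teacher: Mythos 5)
Your proposal is correct and follows essentially the same route as the paper's proof: rewrite the density of $\bar{\pi}_n$ with respect to Lebesgue measure on the simplex so that the $n$-dependent Dirichlet factor is absorbed into $f_n^{m_n}$ (with $m_n=n$ for $\lambda=0$ and $m_n=n^{1-\lambda}$ for $\lambda\in(0,1)$), apply Proposition \ref{point2}, and then invoke Lemmas \ref{lemma1} and \ref{lemma2}; your choices of $f_n$ and $f$ coincide exactly with the paper's. The one substantive difference is the uniqueness step: the paper devotes Lemma \ref{sol} to a Lagrangian computation that both proves uniqueness and produces the explicit fixed-point formulas (\ref{solla1}) and (\ref{solla2}) for $q^*$, whereas you obtain uniqueness directly from strict concavity of $\ln\langle e^{-\phi},q\rangle+\sum_k\alpha_k\ln q(k)$ (respectively $-\langle\phi,q\rangle+\sum_k\alpha_k\ln q(k)$) together with the boundary blow-up of the $\ln q(k)$ terms; this is cleaner and fully sufficient for the theorem as stated, though it does not deliver the closed forms of $q^*$ that the paper later uses in the experiments. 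Your explicit estimate for $\sup_{t\in[0,1]}\bigl(t^{\alpha_k-c_n}-t^{\alpha_k}\bigr)=O(c_n)$ actually fills in a detail the paper dismisses with ``it is not hard to see.''
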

Let us start with proving the uniqueness of the solutions of
(\ref{measure1}) and (\ref{measure2}). The proof of the following
lemma is in appendix.
\begin{lemma}\label{sol}\hfil\break
\begin{description}
  \item[1)] The function (\ref{measure1}) has an unique maximizer
  $q^*$, where
\begin{equation}\label{solla1}
q^*(k)={\a_k\over (1+|\a|)-{w(k)\over \theta}}, \quad k=1,\ldots,K
\end{equation}
where $\theta>0$ is a parameter satisfying $\theta=\langle w
,q^* \rangle$.
  \item[2)] The function (\ref{measure2}) has an unique maximizer $q^*$, where
\begin{equation}\label{solla2}
q^*(k)={ \alpha_k \over \phi(k)+|\a|-\theta},\quad k=1,\ldots,K,
\end{equation}
where $\theta>0$ is the parameter satisfying $\theta=\langle
\phi,q^*\rangle.$
\end{description}
\end{lemma}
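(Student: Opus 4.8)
The plan is to treat both parts as strictly concave maximisation problems over the simplex $\P$ and to identify the maximiser through the Lagrange (first-order) conditions. For part 1, note that $\langle e^{-\phi},q\rangle=\langle w,q\rangle$, so the objective is $\ln\langle w,q\rangle+\sum_k\a_k\ln q(k)$; the first term is concave as the composition of the concave logarithm with a linear map, and each term $\a_k\ln q(k)$ is strictly concave since $\a_k>0$, so the whole objective is strictly concave. For part 2 the objective $-\langle\phi,q\rangle+\sum_k\a_k\ln q(k)$ is the sum of a linear function and the same strictly concave barrier, hence also strictly concave. In both cases strict concavity on the convex set $\P$ gives at most one maximiser, and because the $\sum_k\a_k\ln q(k)$ term drives the objective to $-\infty$ as any coordinate $q(k)\to 0$, the maximiser lies in the relative interior of $\P$. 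Consequently a stationary point of the Lagrangian is both necessary and sufficient, and it suffices to solve the first-order conditions.

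First I would introduce a multiplier $\mu$ for the constraint $\sum_k q(k)=1$. Differentiating in $q(k)$ gives, for part 1, $\frac{w(k)}{\langle w,q\rangle}+\frac{\a_k}{q(k)}=\mu$, and for part 2, $-\phi(k)+\frac{\a_k}{q(k)}=\mu$. Writing $\theta:=\langle w,q^*\rangle$ (respectively $\theta:=\langle\phi,q^*\rangle$) and solving for $q^*(k)$ yields $q^*(k)=\a_k/(\mu-w(k)/\theta)$ (respectively $q^*(k)=\a_k/(\mu+\phi(k))$). To pin down $\mu$ I would multiply each first-order condition by $q^*(k)$ and sum over $k$: using $\sum_k q^*(k)=1$ together with the defining relations $\sum_k w(k)q^*(k)=\theta$ and $\sum_k\phi(k)q^*(k)=\theta$, the identity collapses to $\mu=1+|\a|$ in part 1 and $\mu=|\a|-\theta$ in part 2, which reproduce exactly the claimed formulas (\ref{solla1}) and (\ref{solla2}), with $\theta$ determined self-consistently.

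The main obstacle is not the algebra but ensuring the formulas are well-posed: the expressions $(1+|\a|)-w(k)/\theta$ and $\phi(k)+|\a|-\theta$ must be strictly positive for every $k$, as otherwise $q^*(k)$ would be undefined or non-positive. I would resolve this by invoking the interior conclusion already obtained: since the unique maximiser $q^*$ satisfies $q^*(k)>0$ for all $k$ and $\a_k>0$, the denominators produced by the first-order conditions are automatically positive, so no separate case analysis is needed. The only remaining subtlety is that $\theta$ is defined implicitly through $\theta=\langle w,q^*\rangle$ (respectively $\langle\phi,q^*\rangle$); but existence and uniqueness of the interior maximiser already guarantee a unique such $\theta$, so the implicit characterisation is legitimate. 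If an explicit scalar equation for $\theta$ were wanted, one could substitute the formula for $q^*(k)$ into $\sum_k q^*(k)=1$ to obtain a single monotone equation in $\theta$; establishing its monotonicity would give an independent uniqueness proof, but this is unnecessary once strict concavity is in hand.
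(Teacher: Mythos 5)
Your proof is correct, and it reaches the formulas (\ref{solla1}) and (\ref{solla2}) by the same Lagrangian computation as the paper (identical first-order conditions, same trick of multiplying by $q(k)$ and summing to identify the multiplier as $1+|\a|$, resp.\ $|\a|-\theta$). Where you genuinely diverge is in how uniqueness is established. The paper does not invoke concavity at all: it takes the explicit form $q^*(k)=\a_k/((1+|\a|)-w(k)/\theta)$ produced by the stationarity conditions and observes that the right-hand side is strictly monotone in the scalar $\theta$, so that at most one value of $\theta$ can make the candidate sum to one; this is exactly the ``single monotone equation in $\theta$'' that you mention in your last sentence as an optional alternative. You instead prove strict concavity of the objective on $\mathcal{P}$ up front (concave $\ln\langle w,q\rangle$, or linear $-\langle\phi,q\rangle$, plus the strictly concave barrier $\sum_k\a_k\ln q(k)$) and let convex analysis deliver uniqueness and the sufficiency of the first-order conditions in one stroke. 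Your route buys a cleaner logical structure: it guarantees that the unique stationary point is actually the global maximiser and that it is interior (so the denominators are automatically positive), points the paper handles only implicitly. The paper's route buys slightly more elementary bookkeeping and, as a by-product, an effective one-dimensional equation for computing $\theta$, which is what is actually used in the examples of section \ref{sec:experiments}. One small thing worth making explicit in your write-up: before concluding the maximiser is interior you should note that a maximiser exists at all (the objective is upper semicontinuous on the compact simplex with value $-\infty$ on the boundary and finite in the interior), after which interiority and uniqueness follow as you say.
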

\paragraph{Proof of theorem \ref{thm2}.}
\begin{description}
  \item[1)] In the case $\lambda=0$, the measure $\bp$ has the following
  density with respect to the Lebesgue measure:
$$\bp(q)={1\over Z_n }\langle e^{-\phi} ,q \rangle^{n}\cdot {1\over B(n\a)}\prod_{k=1}^K
(q(k))^{n(\alpha_k-{1\over n})}={f_n(q)^n\over Z'_n},$$ where
$$f_n(q):=\langle e^{-\phi} ,q \rangle\prod_{k=1}^K(q(k))^{(\alpha_k-{1\over
n})},\quad Z'_n:=\int f^n_n(q)dq.$$ Clearly for every $q$,
$f_n(q)\to f(q)$, where
$$f(q)=\langle e^{-\phi} ,q \rangle\prod_{k=1}^K(q(k))^{\alpha_k}.$$
It is not hard to see that the convergence is uniform, i.e.
$\sup_q |f_n(q)-f(q)|\to 0$. By {\bf 1)} of Lemma \ref{sol}, the
 function $f$ has unique maximizer $q^*$  (\ref{solla1}), i.e.
${{\cal P}^*}=\{q^*\}$.  Now apply Proposition
\ref{point2} with
 $\pi$ being the Lebesgue measure on $\P$ (hence $\pi$ is
finite) and $m_n=n$ so that
$$\nu_n(E)=\int_E h_n d q={1\over Z'_n} \int_E {f_n(q)^n }dq=\bp(E).$$
Since all assumptions are fulfilled, we have $\bp\Rightarrow
\delta_{q^*}$. Since $w_n=w$, by Lemma \ref{lemma1} the limit
process has finite-dimensional distributions
$$P^*(x_1,\ldots,x_m)=\prod_{i=1}^mr^*(x_i),\quad \text{where}\quad r^*=r_{q^*}$$
so that the limit process $P^*$ corresponds to a i.i.d. sequence
$X_1,X_2,\ldots$  with $X_1\sim r^*$. According to Lemma
\ref{lemma2}, the frequencies $Q_n$ converge weakly to the
measure ${r^*}$ and this is also quite obvious by SLLN.
\item[2)] The proof is similar: $\bp$ has density (with respect to
Lebesgue measure)
$${f_n(q)^{m_n}\over Z'_n},\quad \text{where}\quad f_n(q)=\langle e^{-{\phi\over n^{\lambda}}} ,q \rangle^{n^{\lambda}}\prod_{k=1}^K
(q(k))^{(\alpha_k-{1\over n^{1-\lambda}})},\quad
m_n=n^{(1-\lambda)}.$$ Since
$$\langle e^{-{\phi\over n^{\lambda}}} ,q \rangle^{n^{\lambda}}\to
e^{\langle \phi,q\rangle },$$ uniformly over $q$, we have that
sequence $f_n$ converges uniformly to $$f(q)=e^{-\langle
\phi,q\rangle }\prod_{k=1}^K (q(k))^{\alpha_k}.$$ By {\bf 2)} of
Lemma \ref{sol}, the function $f$ has unique maximizer $q^*$
 (\ref{solla2}). As in the case {\bf 1)}, it is easy to see that
 the  assumptions of Proposition \ref{point2} are fulfilled with
$\pi$ being Lebesgue measure on $\P$, $m_n=n^{1-\lambda}$ and so
$\bp\Rightarrow \delta_{q^*}$. In the present case, for every
$k=1,\ldots,K$, $w_n(k)\to 1$ and so by Lemma \ref{lemma1}, the
limit process $P^*$ is i.i.d. process with distribution ${q^*}$
(because $r$ is identity function). According to Lemma
\ref{lemma2}, the frequencies $Q_n$ converge weakly to the
measure $\delta_{q^*}$.
\end{description}
\subsection{Relation between $\lambda=0$ and $\lambda\in (0,1)$}
From (\ref{exp}), it follows:
\begin{equation}\label{exp2}
\sup_{q\in {\cal P}}\Big|  m\ln \langle \exp[-{\phi\over m}], q
\rangle +\langle \phi, q \rangle]\Big|\to 0
\end{equation}
so that with
$$f_m(q):=\ln \langle \exp[-{\phi\over m}], q
\big\rangle+\sum_k {\a_k\over m} \ln q(k), \quad f(q)=-\langle \phi,
q \rangle+\sum_k {\a_k} \ln q(k),$$ we have
$$\sup_{q\in {\cal P}}|m f_m(q)-f(q)|\to 0.$$
Since $f(q)$ is as in (\ref{measure2}), it has the unique  maximizer
$q^*$ given in (\ref{solla1}). On the other hand, the maximizer of
$m f_m(q)$ is the same as the maximizer of $f_m(q)$, which
corresponds to (\ref{measure1}) where $\phi$ is replaced by $\phi/m$
and $\a$ is replaced by $\a/m$. Let this unique maximizer be $q^*_m$
Since the functions $mf_m(\cdot)$ and $f(\cdot)$ are continuous,
uniformly convergent and having unique maximum, it follows that
$q^*_m \to  q^*$ (in usual sense, because ${\cal P}$ is compact).
Thus, we have proven the following proposition.
\begin{proposition}\label{propa} Let
$$q^*_m=\arg\max_q \Big(\ln \langle \exp[-{\phi\over m}], q
\big\rangle+\sum_k {\a_k\over m} \ln q(k)\Big)$$ and let $q^*$ be
the maximizer of (\ref{measure2}). Let $r_m^*$ be the corresponding
$r$ measure, i.e. $r_m^*(k)\propto q^m(k)\exp[-{\phi(k)\over m}].$
Then $q^*_m\to q^*$ and $r^*_m\to q^*$. \end{proposition}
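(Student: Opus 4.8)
The plan is to deduce both convergences from a single argmax-stability principle: if continuous functions converge uniformly and the limit has a unique maximizer, then the maximizers converge. First I would record that $q_m^* = \arg\max_q f_m(q) = \arg\max_q m f_m(q)$, since multiplying by the positive constant $m$ does not move the maximizer. Then I would establish the uniform convergence $\sup_{q\in\P}|m f_m(q) - f(q)|\to 0$. The crucial observation is that the singular terms cancel: writing out the difference gives $m f_m(q) - f(q) = m\ln\langle \exp[-\phi/m],q\rangle + \langle\phi,q\rangle$, so the $\sum_k \a_k \ln q(k)$ contributions drop out entirely, and the claim is exactly (\ref{exp2}).

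The genuine obstacle is that neither $f$ nor $m f_m$ is bounded on $\P$: because each $\a_k>0$, the term $\sum_k\a_k\ln q(k)$ drives both functions to $-\infty$ as $q$ approaches the boundary $\partial\P$, so the naive ``continuous functions on a compact set'' argument does not apply directly. I would handle this by confining the maximizers to a compact subset of the interior. Since by Lemma \ref{sol} the limit maximizer $q^*$ lies in the interior (all its coordinates (\ref{solla2}) are strictly positive) and $f(q)\to-\infty$ as $q\to\partial\P$, I can choose a compact set $C\subset\mathrm{int}\,\P$ with $f(q) < f(q^*)-1$ for all $q\notin C$. Uniform convergence then gives $m f_m(q) < f(q)+\tfrac12 < f(q^*)-\tfrac12 < m f_m(q^*)$ for every $q\notin C$ and all large $m$, so the maximizer $q_m^*$ must lie in $C$. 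On $C$ both families are continuous and bounded and the uniform convergence persists, restoring the classical setting.

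With the maximizers trapped in $C$, the convergence $q_m^*\to q^*$ follows by the standard subsequence argument, which I expect to be the main technical step. By compactness of $C$, any subsequence of $(q_m^*)$ has a further subsequence $q_{m_j}^*\to\bar q\in C$. Since $q_{m_j}^*$ maximizes $m_j f_{m_j}$, we have $m_j f_{m_j}(q_{m_j}^*)\ge m_j f_{m_j}(q^*)$; letting $j\to\infty$ and using the uniform convergence $m f_m\to f$ on $C$ together with the continuity of $f$ yields $f(\bar q)\ge f(q^*)$, whence $\bar q=q^*$ by the uniqueness in Lemma \ref{sol}. As every subsequence thus has a sub-subsequence converging to $q^*$, the whole sequence converges. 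Finally, $r_m^*\to q^*$ is immediate: $r_m^*(k)\propto q_m^*(k)\exp[-\phi(k)/m]$, and since $\exp[-\phi(k)/m]\to1$ for each $k$ while $q_m^*\to q^*$, the unnormalized weights converge to $q^*(k)$ and their sum to $\sum_k q^*(k)=1$, so $r_m^*(k)\to q^*(k)$ for every $k$.
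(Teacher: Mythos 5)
Your proposal is correct and follows essentially the same route as the paper: cancel the singular $\sum_k\alpha_k\ln q(k)$ terms so that $\sup_q|mf_m(q)-f(q)|\to 0$ reduces to (\ref{exp2}), note that $q^*_m$ maximizes $mf_m$ as well as $f_m$, and conclude convergence of maximizers from uniform convergence plus uniqueness on the compact simplex. The paper simply asserts this last argmax-stability step, whereas you carefully justify it (trapping the maximizers in a compact subset of the interior and running the subsequence argument) and also spell out the easy deduction $r^*_m\to q^*$; these are welcome refinements of, not departures from, the paper's argument.
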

\subsection{Product of Dirichlet priors}\label{sec:prod}
Recall the setup in Subsection \ref{sec:product}. The set of genomes
is now $\X^L=\overbrace{\X\times \cdots \times \X}^L$ and the
breeding process $\xi=(\xi^1,\ldots,\xi^L)$, where $\xi^l$ are
independent exchangeable processes. We now assume that  the prior of
$\xi^l$ is $\pi^l={\rm Dir}(\a^l),$  where
$\a^l=(a_1^l,\ldots,\a_K^l)$, $l=1,\ldots,L$. In this model, $L$
different Polya urns are run independently. Let ${\cal P}^L$ be the
set of $L$-fold product measures: $${\cal P}^L:=\{q^1\times \cdots
\times q^L: q^j\in \P\},$$ where $\P$, as previously, stands for the
$(K-1)$-dimensional simplex. Observe that $\P^L$ is a compact subset
of the set of all possible probability measures on $\X^L$. Since the
components of $\xi$ are independent, the prior $\pi$ of $\xi$ is the
product of Dirichlet measures $\pi=\pi^1\times \cdots \times \pi^L$.
This means that the support of $\pi$ is $\P^L$ and for every element
$q=q^1\times \cdots\times q^L\in \P^L$, the density is (with slight
abuse of notation, $\pi$ stands for the measure as well as for its
density)
$$\pi(q)=\prod_{l=1}^L\pi^l(q^l)={1\over B}\prod_{l=1}^L \prod_{k=1}^K
\big(q(k)^l\big)^{\a^l_k-1},\quad B:=\prod_{l=1}^L B(\a^l).$$
The function $\phi$ is now defined on the set $\X^L$, and so for any
$q\in \P^L$, $$\langle \phi,q \rangle=\sum_{(k_1,\ldots,k_L)\in
\X^L}\phi(k_1,\ldots,k_L)q^1(k_1)\cdots q^L(k_L)$$ and $\langle
e^{-\phi} ,q \rangle$ is defined similarly. When $\lambda=0$, the
measure $\bp$ has density $f_n(q)^n/Z'_n$, where
$$f_n(q)=\langle e^{-\phi} ,q \rangle \prod_{l=1}^L\prod_{k=1}^K(q^l(k))^{(\alpha^l_k-{1\over
n})},\quad Z'_n:=\int f^n_n(q)dq.$$ Clearly $f_n(q)$ converges
uniformly to
\begin{equation}\label{eq1}
f(q):=\langle e^{-\phi} ,q \rangle
\prod_{l=1}^L\prod_{k=1}^K(q^l(k))^{(\alpha^l_k)}.\end{equation}
Similarly, when $\lambda\in (0,1)$ the measure $\bp$ has density
$f_n(q)^{m_n}/Z'_n$, where $m_n=n^{1-\lambda}$,
$$f_n(q)=\langle e^{-\phi\over n^{\lambda}} ,q \rangle^{n^{\lambda}}\prod_{l=1}^L\prod_{k=1}^K(q^l(k))^{(\alpha^l_k-{1\over
n^{1-\lambda}})},\quad Z'_n:=\int f^{m_n}_n(q)dq.$$ Again, $f_n(q)$
converges uniformly to
\begin{equation}\label{eq2}
f(q):=e^{\langle {-\phi} ,q \rangle}
\prod_{l=1}^L\prod_{k=1}^K(q^l(k))^{(\alpha^l_k)}.\end{equation} When
(\ref{eq1}) (resp. (\ref{eq2})) have unique maximizer $q^*$, then the
statements of Theorem \ref{thm2} hold (the proof is the same):
\begin{description}
  \item[1)]  Suppose $\lambda=0$ and (\ref{eq1}) has unique maximizer
$q^*=q^1\times \cdots\times q^L$. Then $\bp\Rightarrow
\delta_{q^*}$, and $Q_n\Rightarrow \delta_{r^*}$, where
$$r^*(k_1,\ldots,k_L)\propto w(k_1,\ldots,k_L)q^1(k_1)\cdots
q^L(k_L),$$ and $w(k_1,\ldots,k_L)=\exp[-\phi(k_1,\ldots,k_L)].$
Observe that the measure $r^*$ is not necessarily a product measure.
Then also (\ref{fin-dim}) holds with
$$P^*(x_1,\ldots,x_m)=\prod_{i=1}^mr^*(x_i),\quad x_i\in \X^L.$$
  \item[2)]  Suppose $\lambda\in (0,1)$ and (\ref{eq2}) has unique maximizer
$q^*=q^1\times \cdots\times q^L$. Then $\bp\Rightarrow
\delta_{q^*}$, $Q_n\Rightarrow \delta_{q^*}$ and (\ref{fin-dim})
holds with
$$P^*(x_1,\ldots,x_m)=\prod_{i=1}^mq^*(x_i),\quad x_i\in \X^L.$$
\end{description}
In the case $L>1$, the maximizer of (\ref{eq1}) and (\ref{eq2}) is
not always unique. Whether it is unique or not depends on $\phi$ and
vectors $\alpha^l$. Indeed, maximizing (\ref{eq1}) is equivalent to
minimizing
\begin{equation}\label{eq1ln}
-\ln \big(\langle e^{-\phi} ,q \rangle \big)-\sum_{l=1}^L\sum_{k=1}^K \alpha_k^l \ln (q^l(k)),
\end{equation}
and $-\sum_{l=1}^L\sum_{k=1}^K \alpha_k^l \ln (q^l(k))$ is always a
convex function. So, when the parameters $\alpha^l_k$ are big
enough, then the whole function (\ref{eq1ln}) becomes convex. The
same argument holds for (\ref{eq2}). We shall present some
sufficient conditions for convexity of (\ref{eq1}) and (\ref{eq2})
for the case $K=L=2$ below.

Recall: when a positive continuous function $f(q)$ has one maximizer
$q^*$, then for any sequence $m_n\to \infty$, the measures $\nu_n$
with densities proportional to $f^{m_n}(q)$ converge weakly to
$\delta_{q^*}$. When the function has, say, two maximizers, $q^*_1$
and $q^*_2$, then by Proposition \ref{point2}, for all disjoint open
balls $B_1$ and $B_2$ so that $q^*_i\in B_i$, it still holds that
$\nu_n(B_1)+\nu_n(B_2)\to 1$. Thus, when the measures $\nu_n$ are
weakly convergent, then the limit measure is concentrated on $\{
q^*_1,q^*_2\}$ so that the limit measure must be $p
\delta_{q^*_1}+(1-p)\delta_{q^*_2}$, for some $p\in[0,1]$. In this
case $\nu_n(B_1)\to p$. However, the function $f$ might be so that
the limits $\nu_n(B_i)$ do not exist.  And even if they do exist
(i.e. the measures $\nu_n$ are weakly convergent), the limits $p$
and $1-p$ might be arbitrary real numbers, and hard to determine.
Therefore the following theorem adapted from \cite{HaarioSaksman}
(Theorem 5.7 and a remark after it) might be very useful.
\begin{theorem}\label{thmg} Suppose $K\subset \mathbb{R}^k$ is a compact non-empty subset
and let $g: K\to [0,\infty)$ be a twice continuously differentiable
function with finitely many minimum points $\{a_1,\ldots,a_r\}$ all
located in the interior of $K$. Let, for every $i=1,\ldots r$ the
Hessian of $g$ at $a_i$ be positive definite. Given any increasing
sequence $m_n$, define the sequence of measures
$$\nu_n(E):=  {1\over Z_n} \int_E  \exp[-m_n\cdot g(x)]dx,\quad Z_n:=\int \exp[-m_n\cdot
g(x)]dx.$$ Then $\nu_n\Rightarrow \nu$, where
$$\nu=\sum_{i=1}^r p_i \delta_{a_i},\quad \text{  with }
p_i\propto {1\over \sqrt{ \det H(a_i)}}$$
and $\det H(a_i)$ is a determinant of Hessian evaluated at
$a_i$.\end{theorem}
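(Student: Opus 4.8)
The plan is to treat this as a multivariate Laplace asymptotic evaluation, combining the concentration already provided by Proposition \ref{point2} with a local Gaussian approximation at each minimizer. First I would set $f := \exp[-g]$, so that $\nu_n$ has Lebesgue density proportional to $f^{m_n} = \exp[-m_n g]$; since $f$ is continuous on the compact set $K$, we have $\|f\|_\infty = \exp[-\min_K g]$ and $\mathcal{S}^* = \arg\max_K f = \{a_1,\ldots,a_r\}$, these being the global minimizers of $g$, which therefore share a common value $g^* := \min_K g$. Proposition \ref{point2}, applied with the constant sequence $f_n \equiv f$ and $\pi$ the (finite) Lebesgue measure on $K$, then gives $\nu_n(\mathcal{S}^*_\delta) \to 1$ for every $\delta > 0$. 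Because each Hessian $H(a_i)$ is positive definite, every $a_i$ is a strict, hence isolated, minimizer; and since $g$ is continuous on the compact $K$ with $\{a_1,\ldots,a_r\}$ its only minimizers, for any family of pairwise disjoint open balls $B_i \ni a_i$ there is a $\delta>0$ with $\mathcal{S}^*_\delta \subset \bigcup_i B_i$. Consequently $\nu_n(\bigcup_i B_i)\to 1$, so every weak subsequential limit of $\nu_n$ is supported on $\{a_1,\ldots,a_r\}$ and must take the form $\sum_i p_i \delta_{a_i}$ with $p_i \ge 0$ and $\sum_i p_i = 1$.

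Next I would identify the weights as $p_i = \lim_n \nu_n(B_i)$ by a term-by-term Laplace estimate. Writing
\[
\nu_n(B_i) = \frac{I_n^{(i)}}{\sum_j I_n^{(j)} + R_n}, \qquad I_n^{(j)} := \int_{B_j} \exp[-m_n g(x)]\,dx,
\]
and $R_n := \int_{K \setminus \bigcup_j B_j}\exp[-m_n g]\,dx$, I note that on $K\setminus\bigcup_j B_j$ one has $g \ge g^* + \eta$ for some $\eta>0$, so $R_n \le |K|\exp[-m_n(g^*+\eta)]$ is exponentially smaller than the leading order obtained below. For each $j$ I would Taylor-expand $g(x) = g^* + \tfrac12 (x-a_j)^\top H(a_j)(x-a_j) + o(|x-a_j|^2)$ about the interior minimizer $a_j$ (the linear term vanishing since $\nabla g(a_j)=0$), substitute $x = a_j + y/\sqrt{m_n}$, and apply dominated convergence together with the Gaussian integral $\int_{\mathbb{R}^k}\exp[-\tfrac12 y^\top H y]\,dy = (2\pi)^{k/2}(\det H)^{-1/2}$ to obtain
\[
I_n^{(j)} \sim \exp[-m_n g^*]\, m_n^{-k/2}\,(2\pi)^{k/2}\,(\det H(a_j))^{-1/2}.
\]
Substituting this into the ratio, the common factor $\exp[-m_n g^*]\,m_n^{-k/2}\,(2\pi)^{k/2}$ cancels between numerator and denominator and $R_n$ drops out, yielding $\lim_n \nu_n(B_i) = (\det H(a_i))^{-1/2}\big/\sum_j (\det H(a_j))^{-1/2} =: p_i$, exactly the claimed weights.

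Finally I would upgrade these masses to full weak convergence. For bounded continuous $\varphi$, I split $\int \varphi\,d\nu_n = \sum_i \int_{B_i}\varphi\,d\nu_n + \int_{(\bigcup_i B_i)^c}\varphi\,d\nu_n$; the last term is at most $\|\varphi\|_\infty\,\nu_n((\bigcup_i B_i)^c)\to 0$, while shrinking the radii of the $B_i$ and using continuity of $\varphi$ at $a_i$ together with $\nu_n(B_i)\to p_i$ gives $\int_{B_i}\varphi\,d\nu_n \to p_i\varphi(a_i)$, hence $\int\varphi\,d\nu_n \to \sum_i p_i\varphi(a_i)$, i.e. $\nu_n \Rightarrow \nu$. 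The main obstacle I anticipate is the rigorous local Laplace step: controlling the Taylor remainder uniformly on the shrinking balls and justifying the dominated-convergence passage after the rescaling $x = a_j + y/\sqrt{m_n}$, while simultaneously verifying that both $R_n$ and the truncated Gaussian tails are negligible at the precise order $m_n^{-k/2}\exp[-m_n g^*]$. Everything else is bookkeeping built on the concentration supplied by Proposition \ref{point2}.
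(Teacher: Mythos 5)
Your argument is correct, but note that the paper does not actually prove this statement: Theorem \ref{thmg} is imported verbatim from Haario and Saksman (their Theorem 5.7 and the remark following it) and used as a black box. What you have written is therefore a self-contained proof where the paper offers none, and it is the standard multivariate Laplace-method argument: concentration on the minimizers (which you correctly obtain from Proposition \ref{point2} with $f=\exp[-g]$ and $f_n\equiv f$), a local Gaussian evaluation $I_n^{(j)}\sim \exp[-m_n g^*]\,m_n^{-k/2}(2\pi)^{k/2}(\det H(a_j))^{-1/2}$ at each interior minimizer, cancellation of the common prefactor in the ratio, and the routine upgrade from ball masses to weak convergence. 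Two points deserve explicit care if you write this out in full. First, the hypothesis ``increasing sequence $m_n$'' must be read as $m_n\to\infty$, and the $a_i$ must be \emph{global} minimizers at a common level $g^*=\min_K g$; otherwise the weight formula $p_i\propto(\det H(a_i))^{-1/2}$ is false (a strictly higher local minimum would contribute weight zero). You state this interpretation correctly. Second, the dominated-convergence step after the rescaling $x=a_j+y/\sqrt{m_n}$ needs the uniform quadratic lower bound $g(x)-g^*\ge c\,|x-a_j|^2$ on a sufficiently small ball $B_j$, which follows from positive definiteness of $H(a_j)$ and continuity of the second derivatives; this gives the integrable dominating function $\exp[-c|y|^2]$ and simultaneously shows the $a_j$ are isolated, so the disjoint balls exist. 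With those details filled in, the proof is complete and matches the cited result.
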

To apply the theorem in our case, let us first note that any $K-1$
dimensional simplex can be considered as a $K-1$-dimensional
non-empty compact set $$\P_K=\{(q_1,\ldots, q_{K-1}: q_k\geq 0,
\sum_k^{K-1} q_k\leq 1\}.$$ Therefore, our search space  $\P^L$ can
be considered as a subset in  $\mathbb{R}^{L(K-1)}.$ This subset has
non-empty interior. Clearly  any solution of (\ref{eq1}) and
(\ref{eq2}) has all components strictly positive  so that all
maximizers of maximizer of (\ref{eq1}) are interior points and the
same holds for (\ref{eq2}). We take $g(q)=-\ln f(q)$, where $f$ is
as in (\ref{eq1}) or (\ref{eq2}). Thus, for any $m$, $\exp[-m
g(y)]=f^m(q)$ so that the measure defined in the statement of
theorem is
$$\nu_n(E)\propto \int_E f^{m_n}(q)dq.$$
However, even when the measures $\nu_n$ converge weakly to a limit,
it does not automatically follow that  the measures $\bp$ converge
to the same limit even if $f_n$ converges to $f$ uniformly. This
convergence might depend on the speed of the uniform convergence,
and we leave it for the further studies and proceed with an example
instead.
\subsubsection{The case $K=L=2$}
Let us analyze more closely the case  $K=2$ and $L=2$. Denote
$q^1(1)=:z_1$ and $q^2(1)=:z_2$. Also denote $\a_k^1=\a_k$ and
$\a_k^2=\beta_k$. The function (\ref{eq1ln}) is
\begin{equation}\label{g1}
g(z_1,z_2)=-\ln \big(\langle w,z
\rangle\big)-\alpha_1 \ln z_1- \alpha_2 \ln (1-z_1) - \beta_1 \ln
z_2- \beta_2 \ln (1-z_2),\end{equation} where
$$\langle w,z \rangle=
w(1,1)z_1z_2+w(1,2)z_1(1-z_2)+w(2,1)(1-z_1)z_2+w(2,2)(1-z_1)(1-z_2).$$
Thus with $w^*:=w(1,1)-w(1,2)-w(2,1)+w(2,2)$ and
\begin{align*}
\theta^1_1&:=w(1,1)z_2+w(1,2)(1-z_2),\quad \theta^1_2=w(2,1)z_2+w(2,2)(1-z_2)\\
\theta^2_1&:=w(1,1)z_1+w(2,1)(1-z_1),\quad
\theta^2_2:=w(1,2)z_1+w(2,2)(1-z_1).
\end{align*}
we obtain the Hessian

\begin{align*}
\left(
  \begin{array}{cc}
    {\partial^2 g\over \partial z_1^2}  &  {\partial^2 g\over \partial z_1 \partial z_2} \\
    {\partial^2 g\over \partial z_2 \partial z_1} &  {\partial^2 g\over \partial z_2^2} \\
  \end{array}
\right)=\left(
          \begin{array}{cc}
            {\alpha_1\over z_1^2}+{\alpha_2\over (1- z_1)^2} +{( \theta_1^1-\theta_2^1 )^2\over \langle w,z \rangle^2} &   {(\theta_1^1-\theta_2^1)(\theta_1^2-\theta_2^2)-w^*\langle w,z \rangle\over
\langle w,z \rangle^2}\\
{(\theta_1^1-\theta_2^1)(\theta_1^2-\theta_2^2)-w^*\langle w,z
\rangle\over
\langle w,z \rangle^2}          & {\beta_1\over z_2^2}+{\beta_2\over (1-z_2)^2}+{(\theta_1^2-\theta_2^2)^2\over \langle w,z \rangle^2} \\
          \end{array}
        \right).
\end{align*}
The  elements in the main diagonal are strictly positive and
therefore the matrix is positive definite if the determinant is
positive, i.e.
\begin{align*}
\Big({\alpha_1\over z_1^2}+{\alpha_2\over (1- z_1)^2} +{( \theta_1^1-\theta_2^1 )^2\over \langle w,z \rangle^2}\Big)
\Big({\beta_1\over z_2^2}+{\beta_2\over (1-z_2)^2}+{(\theta_1^2-\theta_2^2)^2\over \langle w,z \rangle^2}\Big)
>\Big({(\theta_1^1-\theta_2^1)(\theta_1^2-\theta_2^2)\over
\langle w,z \rangle^2}-{w^*\over \langle w,z \rangle}\Big)^2.
\end{align*}
Observe: $ \langle w,z \rangle \geq \min_{i,j}w(i,j)>0$ and so
$$\big|{w^*\over \langle w,z \rangle}\big|\leq {|w^*|\over \min_{i,j}w(i,j)}.$$
On the other hand, for any $(z_1,z_2)\in [0,1]\times[0,1]$,
$${\alpha_1\over z_1^2}+{\alpha_2\over (1-z_1)^2}\geq (\a_1^{1\over
3}+\a_2^{1\over 3})^3, \quad {\beta_1\over z_2^2}+{\beta_2\over
(1-z_2)^2}\geq (\beta_1^{1\over 3}+\beta_2^{1\over 3})^3.$$
Therefore, it is not hard to see that when
\begin{equation}\label{posd1}
(\a_1^{1\over 3}+\a_2^{1\over 3})^3(\beta_1^{1\over
3}+\beta_2^{1\over 3})^3>\big({w^*\over
\min_{i,j}w(i,j)}\big)^2,\end{equation} then the Hessian is always
positive definite, i.e. the function $g$ in (\ref{g1}) is strictly
convex, and the minimum unique. In particular, the condition holds
if $w^*=0$. In particular, if $\a_1=\beta_1$, $\a_2=\beta_2$ and
$w(1,2)=w(2,1)$, then under (\ref{posd1})  the unique solution
$z_1,z_2$ satisfies $z_1=z_2$ (by symmetry). Indeed, in symmetric
case it holds: if $(z_1,z_2)$ is a solution, then so must be
$(z_2,z_1)$, and if the solution is unique, then it must be that
$z_1=z_2$. If (\ref{posd1}) fails, then the function might be
non-convex, and for small $\alpha_k$ and $\beta_k$ values it is
(given $w^*\ne 0$), but evaluated at the minimums, the Hessian might
still be positive definite and so Theorem \ref{thmg} might apply.

Similarly, for (\ref{eq2})
\begin{equation}\label {g2}
g(z)=-\ln f(z)= \langle \phi, z \rangle -\alpha_1^1 \ln z_1-
\alpha_2^1 \ln (1-z_1) - \alpha_1^2 \ln z_2- \alpha_2^2 \ln
(1-z_2)\end{equation} and so the Hessian is
\begin{align*}
\left(
  \begin{array}{cc}
    {\partial^2 g\over \partial z_1^2}  &  {\partial^2 g\over \partial z_1 \partial z_2} \\
    {\partial^2 g\over \partial z_2 \partial z_1} &  {\partial^2 g\over \partial z_2^2} \\
  \end{array}
\right)=\left(
          \begin{array}{cc}
            {\alpha_1\over z_1^2}+{\alpha_2\over (1- z_1)^2} & \phi^* \\
\phi^*          & {\beta\over z_2^2}+{\beta_2\over (1-z_2)^2} \\
          \end{array}
        \right),
\end{align*}
where
$$\phi^*:=\phi(1,1)-\phi(1,2)-\phi(2,1)+\phi(2,2).$$
Since the elements of the main diagonal are are strictly positive,
the matrix is positive definite if and only if the determinant is
positive:
\begin{equation}\label{pdf}
\big({\alpha_1\over z_1^2}+{\alpha_2\over (1-
z_1)^2}\big)\big({\beta_1\over z_2^2}+{\beta_2\over
(1-z_2)^2}\big)>(\phi^*)^2.\end{equation}
Thus, when the following inequality holds
\begin{equation}\label{posdef2}
(\a_1^{1\over 3}+\a_2^{1\over 3})^3(\beta_1^{1\over
3}+\beta_2^{1\over 3})^3>(\phi^*)^2,
\end{equation}
then the Hessian is always positive definite and the function
(\ref{g2}) strictly convex implying that the minimum is unique. If
the Hessian is not always unique, but (\ref{pdf}) holds for
minimums, then Theorem \ref{thmg} applies. Again, when $\a=\beta$
and $\phi(1,2)=\phi(2,1)$, then under (\ref{posdef2}) the unique
minimum is such that $z_1=z_2$. For example, when $\a=\beta=(2,2)$
and $\phi(1,1)=1, \phi(1,2)=\phi(2,1)=2,\phi(2,2)=3$, then
$\phi^*=0$ and, therefore, (\ref{posdef2}) holds. It means the
minimum is unique, $z_1=z_2$, and one can verify that
$$z_1=z_2={\sqrt{17}-3\over 2}\approx 0.561.$$ So the
unique limit distribution in this case is $q\times q$, where
$q=(z,1-z)$. But when $\a=(2,3)$, $\beta=(3,2)$ and $\phi$ is as
previously, then the minimum is again unique but since $\a\ne
\beta$, we now have $z_1\ne z_2$: $z_1=\sqrt{6}-2\approx 0.45,\quad
z_2=\sqrt{7}-2\approx 0.645.$ This means: the unique limit
distribution is $q^1\times q^2$, where
$q^1=(\sqrt{6}-2,3-\sqrt{6})$, $q^2=(\sqrt{7}-2,3-\sqrt{7})$.

When $\a=\beta=(2,2)$, $\phi(1,1)=4,
\phi(1,2)=\phi(2,1)=2,\phi(2,2)=4$, then $\phi^*=4$, but
(\ref{posdef2}) still holds and therefore there is unique minimum:
$z_1=0.5,z_2=0.5$. However, when the $\phi$ is as previously, but
 $\a=\beta=(0.25,0.25)$, then (\ref{posdef2})
fails. It turns out that now the function is not convex and  there
are two minima:
$$(z_1={2-\sqrt{2}\over 4},z_2={2+\sqrt{2}\over 4}),\quad
(z_1={\sqrt{2}+2\over 4},z_2={\sqrt{2}-2\over 4})$$ Observe that
$z_2=1-z_1$. Thus the limit measures are $q^1\times q^2$ and
$q^2\times q^1$, where $q^1=(z_1,z_2)$ and $q^2=(z_2,z_1)$. These
two product measures   are different. Finally observe that in both
cases (\ref{pdf}) holds, so that by Theorem \ref{thmg},
$$\nu_n\Rightarrow {1\over 2}\delta_{q^1\times q^2}+ {1\over
2}\delta_{q^2\times q^1}.$$ The function
$$f_n(q)=\langle e^{-{\phi\over n^{\lambda}}},q\rangle^{n^{\lambda}}
\prod_{l=1}^2\prod_{k=1}^2 (q(k)^l)^{0.25-{1\over n^{1-\lambda}}}$$
is symmetric, i.e. $f_n(z_1,z_2)=f_n(z_2,z_1)$ and then $\bp
\Rightarrow {1\over 2}\delta_{q^1\times q^2}+ {1\over
2}\delta_{q^2\times q^1}$. Since now $r_q=q$, by Lemma \ref{lemma1},
(\ref{fin-dim}) holds, with
$$P^*(x_1,\ldots,x_m)= {1\over 2}\prod_{i=1}^mq^1\times q^2(x_i)+{1\over 2}\prod_{i=1}^mq^2\times q^1(x_i),\quad x_i\in \X^L.$$

\subsection{An application: exact calculation of stationary distributions for epistatic fitness}
Exchangeable sampling with a product of Dirichlet priors   is an abstract model of sexual reproduction, with the constraint that the population is in linkage equilibrium. For a considerable class of fitness functions
it is possible to compute the stationary distribution exactly, so that the model is potentially a tool for the investigation of the properties of genetic architectures. This line of research is beyond the scope of this paper, but we give one example of computation of the stationary distribution for a non-trivial fitness function in a multi-locus system.

Consider a the case where $\mathcal{X}=\{1,2\}^L$, and the fitness of a genome $x = (x^1, \ldots, x^L)$ is defined as
\begin{equation*}
w(x^1, \ldots, x^L) =
\begin{cases}
1 + s & \text{if $x^1=\cdots=x^L=1$}\\
1 & \text{otherwise}
\end{cases}
\end{equation*}
In other words, if a genome $(x^1, \ldots,x^L)$ is `perfect', consisting only of the preferred alleles (represented as 1s), then it has a fitness advantage of $s$, but if even a single element is imperfect, then there is no fitness advantage.  It seems plausible that there are  that there are combinations of well-separated alleles in biology that have a fitness advantage only if all alleles in the set are of the correct type; this is a simple model of such a situation.

To keep the notation simple, we assume that all independent components of breeding process $\xi^l$,  $l=1,\ldots,L$ have the same Dirichlet prior with parameter  $\alpha=(\alpha_1,\ldots,\alpha_K)$. We shall denote the law of the  breeding process by   $P_{\xi}(\cdot|L,\alpha)$. When $(\xi_1,\ldots,\xi_n)\sim P_{\xi}(\cdot|L,\alpha)$, then the probability that the number of  prefect genomes (vectors $\xi_i$ consisting of ones, only) equals to  $n_c$ is
\begin{equation*}
P_\xi(n_c, n-n_c; L, \alpha) := P\left( \sum_{i=1}^n I_{1+s}(w(\xi_i)) = n_c \right).
\end{equation*}
 In words, $P_\xi( n_c, n-n_c; L, \alpha)$ denotes the probability of $n_c$ perfect genomes and $n-n_c$ imperfect genomes in a population of $n$ genomes, each of length $L$, sampled from a product of $L$ Polya urn processes, each with prior parameter $\alpha$.

We calculate $P_\xi( n_c, n-n_c ; L, \alpha)$ using recursion on $L$. When $L=1$, $n_c$ is simply the number of 1s in the population of $n$, so
\begin{equation*}
P_\xi(n_c, n-n_c; 1, \alpha) = \binom{n}{n_c} \frac{ (\alpha_1)_{n_c} (\alpha_2)_{n-n_c} }{(\vert\alpha\vert)_n}.
\end{equation*}
It is not hard to show, using independence of $\xi^1, \ldots, \xi^L$ and exchangeability, that
\begin{equation*}
P_\xi( n_c, n-n_c ; L, \alpha) = \sum_{m=n_c}^N P_\xi( m, n-m; L-1, \alpha) P_\xi(n_c, m-n_c; 1, \alpha)
\end{equation*}
Finally, the stationary distribution of the number of perfect genomes in a population of $n$ is:
\begin{equation}
P_n( n_c, n-n_c; L, \alpha) = \frac{1}{Z(n,L,\alpha)} P_\xi(n_c, n-n_c; 1, \alpha)  (1+s)^{n_c}
\end{equation}
where the normalising constant is
\begin{equation*}
Z(n,L,\alpha) = \sum_{n_c=0}^n P_\xi(n_c, n-n_c; 1, \alpha)  (1+s)^{n_c}.
\end{equation*}
Thus in our model the stationary distribution of the number of perfect genomes is simple to derive without making any approximations. In general, because the processes $\xi^1, \ldots,\xi^L$ are independent, and as a result of the factorisation of the stationary distribution in equation \ref{eq:stationary1},  the stationary distribution for a fitness function $w$ can be derived exactly using the sum-product algorithm if $w$ can be represented as a tractable factor graph \cite{kschischang2001factor} over the $L$ urn distributions. This introduces a new method for studying the effect of epistatic fitness on population structure.

\section{Experiments}
\label{sec:experiments} Let $K=2$, $\phi(1)=0$, $\phi(2)=\ln 6$,
$\alpha_1=0.3$, $\alpha_2=0.7$. Let us find the limit measures $q^*$
as in Theorem \ref{thm2} in the following  cases: $\lambda=0$,
$\lambda\in (0,1)$ and $\lambda=1$.
\paragraph{\bf Case $\lambda=0$:} Then, as it can be easily checked by
verifying (\ref{solla1}) that  $q^*=({3/5},2/5)$. Since
$\theta=\langle q^*,w \rangle=2/3$,  the measure $r^*$ is as
  follows: $r^*_1=w(1)q^*(1)/\theta=9/10$ and
  $r^*(2)=w(2)q^*(2)/\theta=1/10$. Therefore the limit process
  governed by $P^*$ is an i.i.d. process with measure $r^*$, and
  so due to the weight function, the {average} proportion of the first
  genotype has increased from 0.3 (according to the prior) to
  0.9. Figure \ref{fig:graphlambda0} illustrates the convergence.
\begin{figure}[h]
\begin{center}
\includegraphics[width=10cm,height=6cm]{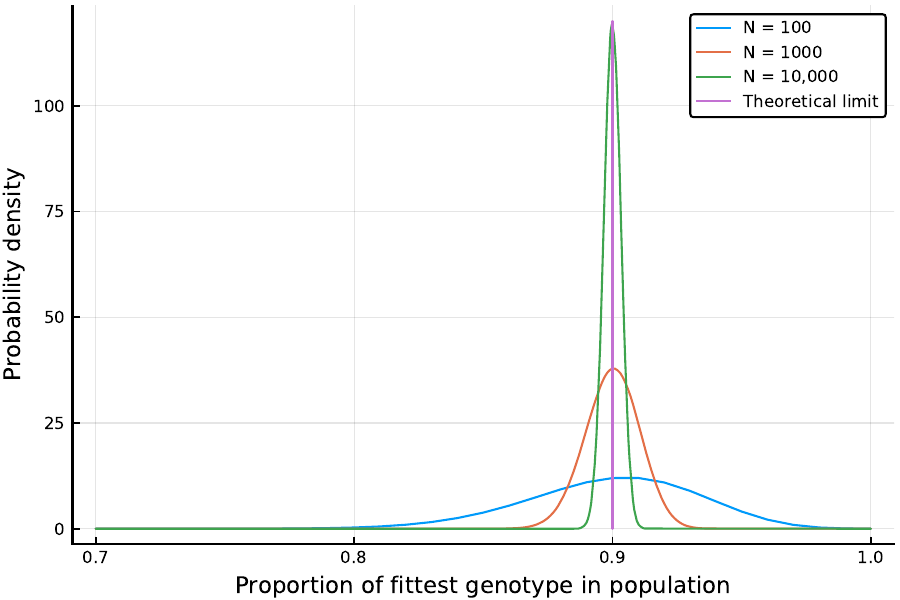}
\caption{\small Density histogram of the fraction of the first type in the population, for a Dirichlet prior $\alpha=(0.3, 0.7)$,
fitness $\phi=(0,\ln 6)$, and $\lambda=0$, and three population sizes $10^2$, $10^3$, and $10^4$. The histograms were constructed by
recording the fraction of the first type in the population over $10^8$ MCMC samples according to the process described in section\ \ref{sec:inversefitnessselection} }.
\label{fig:graphlambda0}
\end{center}
\end{figure}
\paragraph{\bf Case $\lambda\in (0,1)$:} The solution of  (\ref{solla2})
  is
\begin{equation}\label{lumi}
q^*(1)={\ln (6)-1+\sqrt{(\ln (6)-1)^2+1.2 \ln(6)}\over 2\ln
(6)}\approx 0.686. \end{equation}
 Now $r^*=q^*$, thus
we see that the limit process governed by $P^*$ is an i.i.d. process
with measure $q^*$, and so due to the weight function, the
  proportion of first genotype has increased from 0.3 (according
  to the prior) to 0.689. The increase is smaller than in the
  previous case. Figures \ref{fig:graphlambda25},  \ref{fig:graphlambda50} and \ref{fig:graphlambda75} illustrates the convergence for $\lambda=0.25,0.5,0.75$, respectively.
  We see that although the limit is the same, the speed of
  convergence depends very much on $\lambda$.
\begin{figure}[hp]
\begin{center}
\includegraphics[width=9cm,height=4 cm]{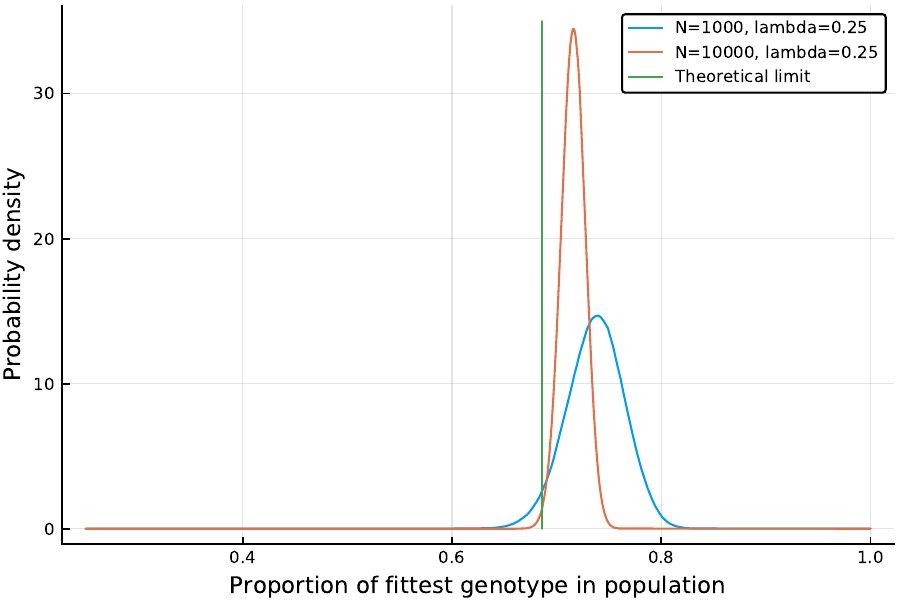}
\caption{\small Case $\lambda=0.25$ : density histogram of the fraction of the first type in the population, for a Dirichlet prior $\alpha=(0.3, 0.7)$, fitness $\phi=(0,\ln 6)$, and $\lambda=0.25$, and  population sizes $10^3$ and $10^4$. The histograms were constructed by recording the fraction of the fitter allele in the population over $10^8$ MCMC samples according to the process described in section\ \ref{sec:inversefitnessselection} }.
\label{fig:graphlambda25}
\end{center}
\begin{center}
\includegraphics[width=9cm,height=4 cm]{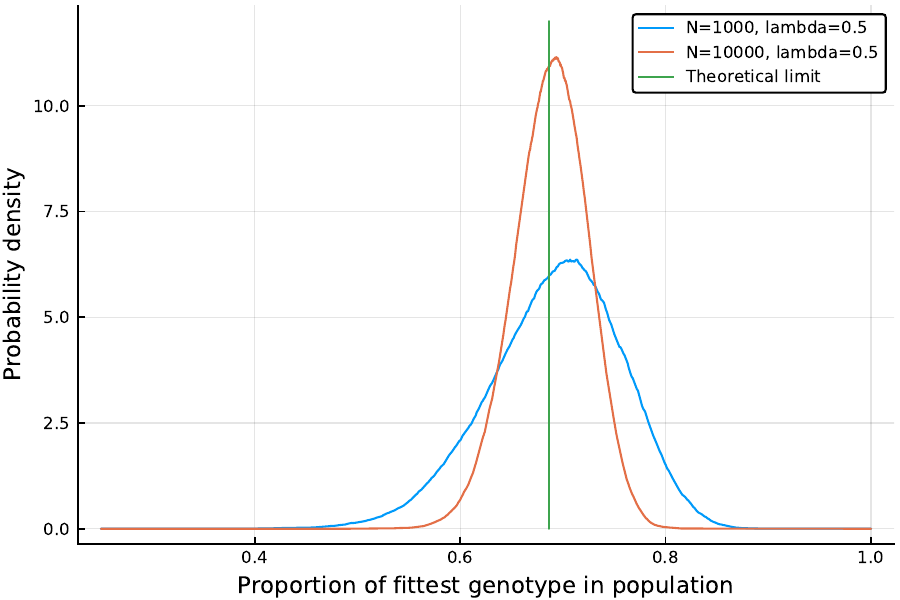}
\caption{\small Case $\lambda=0.5$ : density histogram of the fraction of the first type in the population, for a Dirichlet prior $\alpha=(0.3, 0.7)$, fitness $\phi=(0,\ln 6)$, and  population sizes $10^3$ and $10^4$. The histograms were constructed by recording the fraction of the fitter allele in the population over $10^8$ MCMC samples according to the process described in section\ \ref{sec:inversefitnessselection} }.
\label{fig:graphlambda50}
\end{center}
\begin{center}
\includegraphics[width=9cm,height=4 cm]{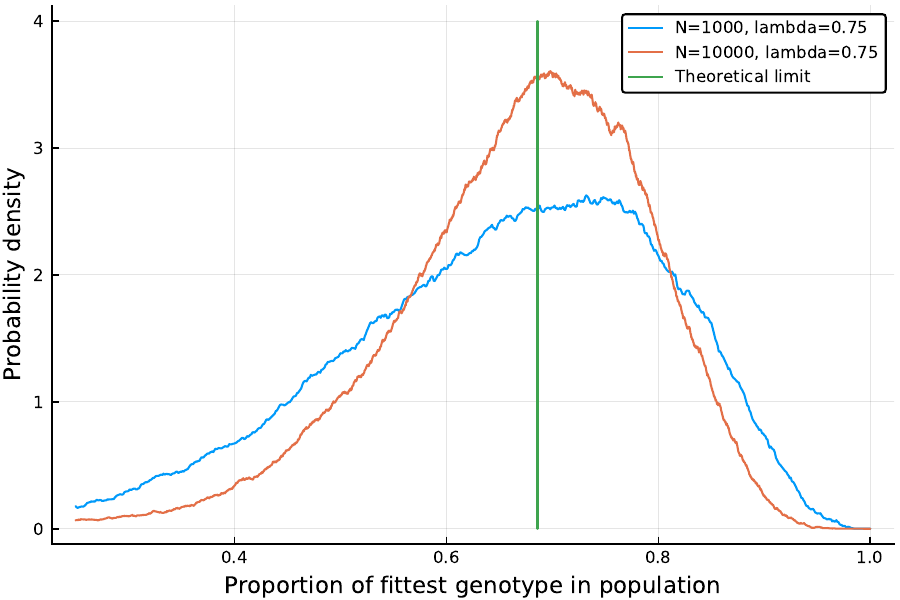}
\caption{\small Case $\lambda=0.75$ : density histogram of the fraction of the first type in the population, for a Dirichlet prior $\alpha=(0.3, 0.7)$, fitness $\phi=(0,\ln 6)$, and  population sizes $10^3$ and $10^4$. The histograms were constructed by recording the fraction of the fitter allele in the population over $10^{10}$ MCMC samples according to the process described in section\ \ref{sec:inversefitnessselection} }.
\label{fig:graphlambda75}
\end{center}
\end{figure}
Let  $q^*_m=(q_m(1),q_m(2))$ be the maximizer of $$\ln \langle
\exp[-{\phi\over m}], q \big\rangle+\sum_k {\a_k\over m} \ln q(k).$$
In our example $q_1(1)=3/5$ (the case $\lambda=0$) and
$$q_m(1)={b_m+\sqrt{b_m^2+{12\over 10}(1+m)\big(1-(1/6)^{1\over m}\big)(1/6)^{1\over
m}}\over 2(1+m) \big(1-(1/6)^{1\over m}\big)},\text{ where
 }b_m=(m+0.3)-(1/6)^{1\over m}(1.3+m).$$
According to Proposition \ref{propa}, in the process $m\to \infty$,
$q_m(1)$ tends to $q^*(1)$ from (\ref{lumi}), and one can easily
verify that it is really so.
\paragraph{\bf Case $\lambda=1$:} According to {\bf 2)} of Theorem
\ref{thm1}, the limit measure $\bar{\pi}$ has density with respect
to the Lebesgue'i measure on $[0,1]$:
$$\bar{\pi}(q)={1\over Z}\exp[-\ln 6\cdot q] (1-q)^{0.3-1}q^{0.7-1},$$
where $Z$ is the valuer of the moment generating function of
Beta(0.7,0.3)-distributed random variable evaluated at $\ln 6$. In
the density above, $q$ stands for $q(2)$. Therefore, the limit
proportion of the second genotype in the stochastic process governed
by $P^*$ is  a random variable, its distribution has density
$\bar{\pi}(q)$ as stated above. Figure \ref{fig:graphlambda1}
illustrates the convergence.
\begin{figure}[h]
\begin{center}
\includegraphics[width=10cm,height=5 cm]{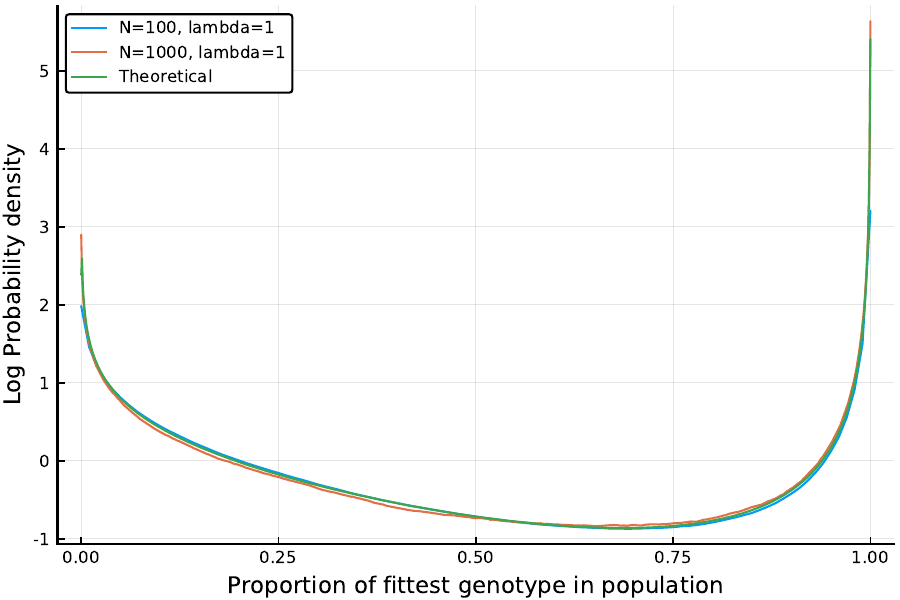}
\caption{\small Case $\lambda=1$ :  Density histogram (log scale) of the fraction of the first type in the population, for a Dirichlet prior
$\alpha=(0.3, 0.7)$, fitness $\phi=(0,\ln 6)$,  and  population sizes $10^2$ and $10^3$. Note that because the concentration parameter is kept constant, the limit distribution is a reweighted $\beta$ distribution, with infinities at 0 and 1: when $\lambda=0$, the prior is the same for all $N$, and the log-fitness term is reduced by a factor of $N$, so that $P_n$ never concentrates. The histograms were constructed by recording the fraction of the fitter allele in the population over $10^9$ and  $10^{10}$ MCMC samples respectively according to the process described in section\ \ref{sec:inversefitnessselection} }.
\label{fig:graphlambda1}
\end{center}
\end{figure}
\section{Conclusions}
\label{sec:conclusions}

We have constructed MCMC algorithms that are similar to existing
genetic algorithms. The `breeding' consists of sampling from
exchangeable distributions based on the Dirichlet distribution, and
the `selection' is essentially Metropolis-Hastings. The sequence of
populations forms a reversible Markov chain that satisfies detailed
balance conditions.  We have exhibited two possible sampling
distributions:
 more elaborate exchangeable sampling distributions are possible. The entire MCMC procedure is a population generalisation of Metropolis-Hastings.
As far as we are aware, this is the first

{implementable computational model} of sexual reproduction that
exactly satisfies detailed balance, and for which the stationary
distribution can be written in closed form for arbitrary fitness
functions.

We also explored some properties of the stationary distribution, and
showed that for any fitness function there are three non-trivial
limiting distributions for large population sizes, with two phase
transitions. This is a first step towards a more general
understanding of the interaction of the population size, fitness
scaling, and mutation rate in genetic algorithms and evolutionary
models.

Formulating a genetic model as a MCMC procedure opens a new research
direction in using the many techniques developed in MCMC to achieve
faster convergence to the stationary distribution using different
MCMC kernels.

We have shown that the stationary distribution is unaffected by multiplicative noise in fitness evaluations.
This has been suggested by, for example, \cite{morse2016simple}, but our techniques allow a  proof of this effect.

{Finally there is a more general conclusion from our analysis.   For
many years, since \cite{holland1975adaptation} and
\cite{goldberg1989genetic}, a widely suggested folk-motivation for
genetic algorithms has been that because they are inspired by
natural biological evolution, and because evolution has produced the
variety of life on earth, genetic algorithms should be in some sense
generally effective. Our analysis makes it clear that genetic
algorithms are more  closely related to conventional MCMC methods
for non-parametric Bayesian inference than has previously been
recognised.

%

\section*{Appendix}
\paragraph{Proof of claim \ref{claim}}
\begin{proof}  First note that
$$\sup_{q\in \P}|\langle w_n,q\rangle-\langle w,q \rangle|\leq \|w_n-w\|\|q\|\leq \|w_n-w\| \to 0.$$
Now use the fact that if $f_n, f, g_n,g$ are nonnegative functions
 such that $\sup_x |f_n(x)-f(x)|\to 0$, $\sup_x
|g_n(x)-g(x)|\to 0$, $\inf_x g(x)=g_*>0$ and $f$, $g$ are bounded
above, then with $g^*=\sup_x g(x)$ and $f^*=\sup_x f(x)$
\begin{align*}
\sup_x \big|{f_n(x)\over f(x)}-{g_n(x)\over g(x)}\big|&=\sup_x
\big|{f_n(x)g(x)-g_n(x)f(x)\over g_n(x)g(x)}\big|\\
&\leq \sup_x \big|{(f_n(x)-f(x))g(x)\over g_n(x)g(x)}\big|+\sup_x
\big|{(g_n(x)-g(x))f(x)\over g_n(x)g(x)}\big|\\
&\leq \sup_x \big|{(f_n(x)-f(x))g^*\over g_n(x)g(x)}\big|+\sup_x
\big|{(g_n(x)-g(x))f^*\over g_n(x)g(x)}\big|\to 0,
\end{align*}
because for $n$ big enough $g_n(x)g(x)\geq {g^2_*\over 2}$ for every
$x$. Take $x=q$, $f_n(q)=w_n(k)q(k)$, $f(q)=w(k)q(k)$,
$g_n(q)=\langle w_n,q\rangle$ and $g(q)=\langle w,q\rangle$. Then
$g_*=w(K)>0$, $f^*=g^*=w(1)$ and so (\ref{unif}) follows.\end{proof}
\subsection{Proof of Proposition \ref{point2}}
Recall that $f_n$ and $f$ are continuous and bounded functions on
$\P$ so that $\|f_n\|_{\infty}<\infty$ and $\|f\|_{\infty}<\infty$.
By assumption, $\pi$ is a finite measure. Since $f_n$ converges to
$f$ uniformly, it follows that $\|f_n-f\|_{\infty}\to 0$ and so $\|
f_n\|_{\infty}\to \| f\|_{\infty}.$ For every $m$,
$$|\|f_n\|_m-\|f\|_m |\leq \|f_n-f\|_m\leq \pi(\P)^{1\over m}\|f_n-f\|_{\infty}\to 0.$$
Since $\|f\|_{m_n}\to \|f\|_{\infty}$, we have
\begin{align*}
\big|\|f_n\|_{m_n}-\|f\|_{\infty}\big|&\leq \big|\|f_n\|_{m_n}-\|f\|_{m_n}\big|+\big|\|f\|_{m_n}-|f\|_{\infty}\big|\\
&\leq \|f_n-f\|_{m_n}+\big|\|f\|_{m_n}-|f\|_{\infty}\big|\leq  \pi(\P)^{1\over m_n}\|f_n-f\|_{\infty}+\big|\|f\|_{m_n}-|f\|_{\infty}\big|\to 0.
\end{align*}
 Now fix $\delta>0$. Since
${{\cal P}^*_{\delta}}:=\{q: f(q) >
\|f\|_{\infty}-\delta \}$, we have ${{\cal P}-{\cal
P}^*_{\delta}}=\{q: f(q)\leq
 \|f\|_{\infty}-\delta\}. $
Define $\delta':=\delta /\|f\|_{\infty}$. Then
\begin{align*}
\sup_{q\in {{\cal P}-{\cal P}^*_{\delta}}}
{f_n(q)\over \|f_n\|_{m_n}}&=\sup_{q\in {{\cal
P}-{\cal P}^*_{\delta}}}{f(q)+(f_n(q)-f(q))\over \|f_n\|_{m_n}}=
\sup_{q\in {{\cal P}-{\cal P}^*_{\delta}}}{f(q)+(f_n(q)-f(q))\over \|f\|_{\infty}}{\|f\|_{\infty}\over  \|f_n\|_{m_n}}\\
&\leq \sup_{q\in {{\cal P}-{\cal
P}^*_{\delta}}}{f(q)\over \|f\|_{\infty}}{\|f\|_{\infty}\over
\|f_n\|_{m_n}}+{\|f_n-f\|_{\infty}\over \|f_n\|_{m_n}}\leq
1-{\delta'\over 2},\end{align*} provided $n$ is big enough. Thus,
$$
\sup_{q\in {\cal P}-{\cal P}^*_{\delta}}h_n(q)\leq
\Big(1-{\delta'\over 2}\Big)^{m_n}\to 0,$$ so that $\nu_n({\cal
P}^{*}_{\delta})\to 1$. We now argue that {when
${\cal P}^*=\{q^*\}$, then} for any $\epsilon>0$ there exists
$\delta>0$ so that
\begin{equation}\label{balls}
{\cal P}^*_{\delta} \subset B(q^*,\e),
\end{equation}
where $B(q^*,\epsilon)$ is a ball in Euclidean sense.  If, for an
$\epsilon>0$, such a $\delta>0$ would not exists, then there would
exist a sequence $q_n\to q$ such that $f(q_n)\nearrow f(q)$, but
$\|q_n-q\|\geq \epsilon.$  Since $\P$ is compact, along a
subsequence $q_{n'}\to q$ and by continuity $f(q_{n'})\to f(q)$. On
the other hand $\|q-q^*\|>\epsilon$ and that would contradict the
uniqueness of $q^*$. Therefore  (\ref{balls}) holds and so  for any
$\e>0$, it holds that $\nu_n \big (B(q^*,\e)\big )\to 1$. From the
definition of the weak convergence, it now follows that
$\nu_n\Rightarrow \delta_{q^*}.$

\subsection{Proof of Lemma \ref{sol}}
\begin{description}
                                       \item[1)]
To find
\begin{equation}\label{probleem1}
q^*=\arg\max_{q\in {\cal P}} \,\,[ \ln \langle e^{-\phi} ,q
\rangle + \sum_{k}\alpha_k\ln q(k)],\end{equation} we define Lagrangian
$$L(q,\beta)=\ln  \langle e^{-\phi} ,q \rangle +
\sum_{k}\alpha_k\ln q(k)-\beta (\sum_k q(k)-1)$$(here $\beta$ is
a scalar) and maximize $L(q,\beta)$ over $q> 0$ (all entries are
positive). Taking partial derivatives with respect to $q(k)$, we
have
$${e^{-\phi(k)}\over \langle e^{-\phi} ,q \rangle}+{\alpha_k\over
q(k)}=\beta,\quad \Rightarrow \quad {e^{-\phi(k)}q(k)\over \langle
e^{-\phi} ,q \rangle}+{\alpha_k}=q(k) \beta \quad \forall k.$$
With $|\alpha|=\sum_k \alpha_k,$ we have thus $\beta=1+|\alpha|$
and so the solution $q^*$ satisfies the set of equalities
\begin{equation}\label{q3}
q^*(k)={1\over 1+|\alpha|}\Big({e^{-\phi(k)}q^*_k\over \langle
e^{-\phi} ,q^* \rangle}+\alpha_k\Big),\quad \forall k.
\end{equation}
Now with  $w(k)=e^{-\phi(k)}$ define parameter $\theta:=\langle
w ,q^* \rangle$
 and rewrite (\ref{q3}) as follows
\begin{equation}\label{theta}
q^*(k)={\alpha_k \over (1+|\alpha|)-{w(k)\over \theta}} \quad
k=1,\ldots,K.\end{equation} We see that amongst the probability
vectors satisfying $\langle w ,q^* \rangle=\theta,$ the solution
is unique. Since $\alpha_k>0$ for every $k$, it is easy to see
that there is only one parameter $\theta$ such that the right
hand side of (\ref{theta}) would be a probability measure: if
$\theta'>\theta$, then for every $k$, we have
$${\alpha_k \over (1+|\alpha|)-{w(k)\over \theta}}>{\alpha_k \over (1+|\alpha|)-{w(k)\over
\theta'}}.$$ Therefore a solution of (\ref{probleem1}) is unique
vector $q^*$  given by (\ref{theta}), where $\theta=\langle w
,q^* \rangle$.
\item[2)] To find
\begin{equation}\label{probleem2}
q^*=\arg\max_{q\in {\cal P}} [-\langle \phi,q\rangle +
\sum_{k}\alpha_k \ln q(k)],\end{equation} we define Lagrangian
$$L(q,\beta)= -\langle \phi,q\rangle +
\sum_{k}\alpha_k\ln q(k)-\beta (\sum_k q(k)-1).$$ Partial
derivatives with respect to $q(k)$ give us the equalities
$$-\phi(k)+{\alpha_k\over
q(k)}=\beta \quad  \forall k\quad  \Rightarrow \quad -\langle
\phi,q\rangle + |\alpha| =\beta.$$ Therefore, the inequalities
for $q^*(k)$ are
\begin{equation}\label{q4}
q^*(k)={\phi(k)q^*(k) - \alpha_k \over \langle \phi,q^*\rangle
-|\alpha|}={\phi(k)q^*(k) - \alpha_k \over \theta -|\alpha|},\quad
\theta:=\langle \phi,q^*\rangle.
\end{equation}
After rewriting (\ref{q4}), we obtain
\begin{equation*}\label{solla2a}
q^*(k)={ \alpha_k \over \phi(k)+|\a|-\theta},\quad k=1,\ldots,K,
\end{equation*}
Thus, there cannot be two solutions having the same $\theta$. As
in the case {\bf 1)}, it is easy to see that when  $\alpha_k>0$
there is only one $\theta$ so that (\ref{solla2a}) sums up to
one. Therefore, the solution to the problem (\ref{probleem2}) is
unique. Note that the solution is independent of $\lambda$.
                                     \end{description}

\subsection*{Acknowledgments} {Research by J{\"u}ri Lember is supported by Estonian Institutional research
funding IUT34-5 and PRG 865. Research by Chris Watkins is supported by grant number (FQXi Grant number FQXi-RFP-IPW-1913) from the Foundational Questions Institute and Fetzer Franklin Fund, a donor advised fund of Silicon Valley Community Foundation.}

\bibliographystyle{plain}

\end{document}